\documentclass[article]{IEEEtran}
\usepackage[top=1in, bottom=1in, left=0.75in, right=0.75in]{geometry}

\usepackage{amsmath, amssymb, epsfig, graphicx, bm}
\usepackage{cases}
\usepackage{float, subfigure}
\usepackage{color}
\usepackage{amsthm}
\usepackage{amsfonts}
\usepackage{cite, url}
\usepackage[all,cmtip]{xy}
\usepackage{color,hyperref}
\usepackage{algorithmic,algorithm}

\usepackage{amsmath}
\usepackage{amsthm}
\usepackage{amsfonts}
\usepackage{algorithm}
\usepackage{algorithmic}

% DEFINED BY MrFive
\newtheorem{Lemma}{Lemma}
\newtheorem{Theorem}{Theorem}
\newtheorem{Assumption}{Assumption}
\newtheorem{Remark}{Remark}

\newcommand{\avg}[2]{\frac 1 {#2}\sum_{{#1}=1}^{#2}}
\newcommand{\avgHonest}{\frac 1 {W-B}\sum_{w\notin \mathcal{B}}}

\begin{document}

%\title{Byrd-SAGA: Byzantine-Resilient Distributed SAGA}
%\title{Byzantine-Resilient Distributed Finite-Sum Optimization over Networks}
%\title{Distributed Variance-Reduced Finite-Sum Optimization with Resilience to Byzantine Attacks}
\title{Federated Variance-Reduced Stochastic Gradient Descent with Robustness to Byzantine Attacks}
\author{\authorblockN{Zhaoxian Wu, Qing Ling, Tianyi Chen, and Georgios B. Giannakis}

\thanks{Zhaoxian Wu and Qing Ling are with School of Data and Computer Science and Guangdong Province Key Laboratory of Computational Science, Sun Yat-Sen University, Guangzhou, Guangdong 510006, China. Tianyi Chen is with Department of Electrical, Computer, and Systems Engineering, Rensselaer Polytechnic Institute, Troy, New York 12180, USA. Georgios B. Giannakis is with Department of Electrical and Computer Engineering and Digital Technology Center, University of Minnesota, Minneapolis, Minnesota 55455, USA. Qing Ling is supported in part by NSF China Grants 61573331 and 61973324, and Fundamental Research Funds for the Central Universities. Georgios B. Giannakis is supported in part by NSF Grants 1509040, 1508993, 1711471, and 1901134. A short version of this paper has been submitted to IEEE International Conference on Acoustics, Speech, and Signal Processing, Barcelona, Spain, May 4-8, 2020. Corresponding Email: lingqing556@mail.sysu.edu.cn.}}

\maketitle

\begin{abstract}
This paper deals with distributed finite-sum optimization for learning over networks in the presence of malicious Byzantine attacks. To cope with such attacks, most resilient approaches so far combine stochastic gradient descent (SGD) with different robust aggregation rules. However, the sizeable SGD-induced stochastic gradient noise makes it challenging to distinguish malicious messages sent by the Byzantine attackers from noisy stochastic gradients sent by the `honest' workers. This motivates us to reduce the variance of stochastic gradients as a means of robustifying SGD in the presence of Byzantine attacks. To this end, the present work puts forth a Byzantine attack resilient distributed (Byrd-) SAGA approach for learning tasks involving finite-sum optimization over networks. Rather than the mean employed by distributed SAGA, the novel Byrd-SAGA relies on the geometric median to aggregate the corrected stochastic gradients sent by the workers. When less than half of the workers are Byzantine attackers, the robustness of geometric median to outliers enables Byrd-SAGA to attain provably linear convergence to a neighborhood of the optimal solution, with the asymptotic learning error determined by the number of Byzantine workers. Numerical tests corroborate the robustness to various Byzantine attacks, as well as the merits of Byrd-SAGA over Byzantine attack resilient distributed SGD.
\end{abstract}

%Through reducing the gradient noise, Byrd-SAGA is able to achieve better performance than Byzantine-resilient SGD.

\begin{IEEEkeywords}
Distributed finite-sum optimization, Byzantine attacks, gradient noise, variance reduction
\end{IEEEkeywords}

% \IEEEpeerreviewmaketitle

\section{Introduction}
\label{sec:introduction}

With the rapid development of information technologies, the volume of distributed data increases explosively. Every day, numerous distributed devices including sensors, cellphones, computers, and vehicles, generate huge amounts of data, which are often forwarded to datacenters for further processing and learning tasks. However, collecting data from distributed devices and storing them in datacenters raise major privacy concerns \cite{Srikant2000,Duchi2013,Zhou2018}. Accounting for these concerns, federated learning has been advocated to provide a privacy-preserving, decentralized data processing and machine learning framework \cite{Konecny2016}. Data in federated learning are kept private, and local computations are carried at the distributed devices. Updates of local variables (such as stochastic gradients, corrected stochastic gradients, and model parameters) are found using per-device private data, while the datacenter aggregates local variables and disseminates the aggregated result to the distributed devices.

Even though privacy is preserved, the distributed nature of federated learning makes it vulnerable to errors and adversarial attacks. Devices can then become unreliable in either computing or communicating, or, they can even be hacked by adversaries. As a result, compromised devices may send malicious messages to the datacenter, thus misleading the learning process \cite{Varshney2013-review,Moura2018-review,Bajwa2019-review}. We will henceforth focus on the class of malicious attacks known as Byzantine attacks~\cite{Lamport1982}. Robustifying federated learning against Byzantine attacks is of paramount importance for secure processing and learning.

To cope with Byzantine attacks in federated learning, several robust aggregation rules have been developed in recent years, mainly towards improving the distributed stochastic gradient descent (SGD) solver of the underlying optimization task. Through aggregating stochastic gradients with the geometric median \cite{Minsker2015GeometricMA,Chen2019DistributedSM}, median \cite{Xie2018GeneralizedBS}, trimmed mean \cite{Yin2018ByzantineRobustDL}, or iterative filtering \cite{Su2018Securing}, stochastic algorithms have been able to tolerate a small number of devices attacked by Byzantine adversaries.
Other aggregation rules include Krum \cite{Blanchard2017MachineLW}, that selects a stochastic gradient having the minimal cumulative squared distance from a given number of nearest stochastic gradients, and RSA \cite{Li2019RSABS} which aggregates models other than stochastic gradients through penalizing the differences between the local and global model parameters. Related works also include adversarial learning in distributed principal component analysis \cite{Feng2014DistributedRL}, escaping from saddle points in non-convex distributed learning under Byzantine attacks \cite{Yin2018Defending}, and leveraging redundant gradients to improve robustness \cite{Chen2018DRACO,Rajput2019DETOX}.

Although robust SGD iterates can ensure convergence to a neighborhood of the attack-free optimal solution, this neighborhood size can be large when Byzantine attacks are carefully crafted \cite{Xie2019FallOB}. Essentially, SGD suffers from the sizeable approximation error (noise) associated with stochastic gradients. This leads to the challenge of distinguishing malicious messages sent by Byzantine attackers from the noisy stochastic gradients sent by `honest' devices.

In the face of this challenge, we posed the following question: \textit{Is it possible to better distinguish the malicious messages from the stochastic gradients through reducing the stochastic gradient-induced noise?} Our answer will turn out to be in the affirmative. Intuitively, if the stochastic gradient noise is small, the malicious messages should be easy to identify; see also the illustrative example in Section \ref{subsec:noise}. This intuition suggests combining variance reduction techniques with robust aggregation rules to handle Byzantine attacks in federated learning.

Existing variance reduction techniques in stochastic optimization include mini-batch \cite{Goyal2017Accurate}, and abbreviated ones as SAG \cite{Schmidt2017MinimizingFS}, SVRG \cite{Johnson2013AcceleratingSG}, SAGA \cite{Defazio2014SAGAAF}, SDCA \cite{ShalevShwartz2013StochasticDC}, SARAH \cite{Nguyen2017Sarah}, Katyusha \cite{Zhu2017KatyushaTF}, to list a few. Among these, we are particularly interested in SAGA, which has been proven effective in finite-sum optimization. SAGA can also be implemented in a distributed manner \cite{Calauznes2017DistributedSM,De2016EfficientDS,Reddi2015OnVR}, and hence it fits well the federated learning applications, where each device deals with a finite number of data samples.

Our proposed novel Byzantine attack resilient distributed (Byrd-) SAGA combines SAGA's variance reduction with robust aggregation to deal with the malicious attacks in federated finite-sum optimization setups. Instead of the mean employed by distributed SAGA, the datacenter in Byrd-SAGA relies on the geometric median to aggregate the corrected stochastic gradients sent by distributed devices. Through reducing the stochastic gradient-induced noise, Byrd-SAGA turns out to outperform the Byzantine attack resilient distributed SGD. When less than half of the workers are Byzantine attackers, the robustness of geometric median to outliers enables Byrd-SAGA to achieve provably linear convergence to a neighborhood of the optimal solution, and the asymptotic learning error is solely determined by the number of Byzantine workers. Numerical tests demonstrate the robustness of Byrd-SAGA to various Byzantine attacks.

%as well as the advantage of Byrd-SAGA over Byzantine-resilient distributed SGD.

\section{Problem Statement}
\label{sec:statement}

We start this section by specifying the federated finite-sum optimization problem in the presence of Byzantine attacks. We then elaborate on the limitations of Byzantine attack resilient distributed SGD algorithms, which motivate our subsequent development of Byrd-SAGA.

\subsection{Federated finite-sum optimization in the presence of Byzantine attacks}

Consider a network with one master node (datacenter) and $W$ workers (devices), among which $B$ workers are Byzantine attackers with their identities unknown to the master node. Let $\mathcal{W}$ be the set of all workers, and  $\mathcal{B}$ that of Byzantine attackers with respective cardinalities  $|\mathcal{W}|=W$ and $|\mathcal{B}|=B$. The data samples are evenly distributed across the honest workers $w \notin \mathcal{B}$. Each honest worker has $J$ data samples, and $f_{w,j}(x)$ denotes the loss of the $j$-th data sample at the honest worker $w$ with respect to the model parameter $x \in \mathbb{R}^p$. We are interested in the finite-sum optimization problem
\begin{align}  \label{problem}
x^* = \arg\min_x ~ f(x) := \frac{1}{W-B} \sum_{w\notin B} f_{w}(x)
\end{align}
where
\begin{align}  \label{functions}
f_{w}(x) :=\avg{j}{J}f_{w, j}(x).
\end{align}
The main challenge of solving \eqref{problem} is that the Byzantine attackers can collude and send arbitrary malicious messages to the master node so as to bias the optimization process. We aspire to develop a robust distributed stochastic algorithm to address this issue. Intuitively, when a majority of workers are Byzantine attackers, it is difficult to obtain a reasonable approximate solution to \eqref{problem}. For this reason, we will assume $B < \frac{W}{2}$ throughout, and prove that the proposed Byzantine attack resilient algorithm is able to tolerate attacks from up to half of the workers.

\subsection{Sensitivity of distributed SGD to Byzantine attacks}

When all workers are honest, a popular solver of \eqref{problem} is SGD \cite{Bottou2010LargeScaleML}. At time slot (iteration) $k$, the master node broadcasts $x^k$ to workers. Upon receiving $x^k$, worker $w$ uniformly at random chooses a local data sample with index $i_w^k$ to obtain the stochastic gradient $f_{w,i_w^k}'(x^k)$ that then communicates back to the master node. Upon collecting stochastic gradients from all workers, the master node updates the model as
\begin{align}
\label{equation:SGDupdate}
x^{k+1} =  x^k-\gamma^k \cdot \avg{w}{W}f_{w,i_w^k}'(x^k)
\end{align}
where $\gamma^k$ is the non-negative step size. Note that the distributed SGD can be extended to its mini-batch version; whereby, each worker uniformly at random chooses a mini-batch of data samples per iteration, and communicates the averaged stochastic gradient back to the master node.

%Though being light-weight in computation and memory, the distributed SGD suffers from slow convergence. To be specific, every $f_{w,i_w^k}'(x^k)$ is only a ``noisy'' estimate of the true gradient $f_{w}'(x^k)$. To mitigate the effect of the gradient noise, the step size $\eta^k$ must be diminishing so as to guarantee exact convergence to the optimal solution. When the loss function $f(x)$ is strongly convex, one can choose $\eta^k=O(1/k)$ to achieve a convergence rate of $O(1/k)$.

While the honest workers send true stochastic gradients to the master node, the Byzantine ones can send arbitrary malicious messages to the master node in order to perturb (bias) the optimization process. Let $m_w^k$ denote the message worker $w$ sends to the master node at slot $k$, given by
\begin{align}
m_w^k=
\begin{cases}
f_{w, i_w^k}'(x^k), \quad & w \notin\mathcal{B}, \\
*, \quad & w \in\mathcal{B}
\end{cases}
\label{definition:mk_SGD}
\end{align}
where $*$ denotes an arbitrary $p \times 1$ vector. Then, the distributed SGD update \eqref{equation:SGDupdate} becomes
\begin{align}   \label{equation:SGDupdate-true}
x^{k+1}  =  x^k - \gamma^k \cdot \avg{w}{W} m_w^k.
\end{align}

Even when only one Byzantine attacker is present, the distributed SGD may fail. Consider that a Byzantine attacker $w_b$ sends to the master node $m_{w_b}^k = - \sum_{w \neq w_b} m_w^k$, which yields $x^{k+1} = x^{k}$. In practice, Byzantine attackers can send more sophisticated messages to fool the master node, and thus bias the optimization process.

\subsection{Byzantine attack resilient distributed SGD}

Recent works often robustify the distributed SGD by incorporating robust aggregation rules when the master node receives messages from the workers. Here, we will adopt and analyze the geometric median, even though alternative robust aggregation rules are also viable \cite{Minsker2015GeometricMA,Chen2019DistributedSM}.

With $\{z, z \in \mathcal{Z}\}$ denoting a subset in a normed space, the geometric median of $\{z, z \in \mathcal{Z}\}$ is
\begin{align} \label{definition:geo}
    \underset{z \in \mathcal{Z}}{{\rm geomed}}\{z \} := \arg\min_{y} \sum_{z \in \mathcal{Z}}\|y-z\|.
\end{align}
Using \eqref{definition:geo}, the distributed SGD in \eqref{equation:SGDupdate-true} can be modified to its Byzantine attack resilient form as
\begin{align}   \label{equation:SGDupdate-robust}
x^{k+1}  =  x^k - \gamma^k \cdot \underset{w \in \mathcal{W}}{{\rm geomed}}\{m_w^k\}.
\end{align}
In essence, the geometric median chooses a reliable vector to represent the received messages $\{m_w^k\}$ through majority voting. When the number of Byzantine workers $B < \frac{W}{2}$, the geometric median approximates reasonably well the mean of $\{m_w^k, w \notin \mathcal{B}\}$. This property enables the Byzantine attack resilient distributed SGD to converge to a neighborhood of the optimal solution~\cite{Minsker2015GeometricMA,Chen2019DistributedSM}.

\begin{figure}
	\centering
	\includegraphics[width=0.45\textwidth]{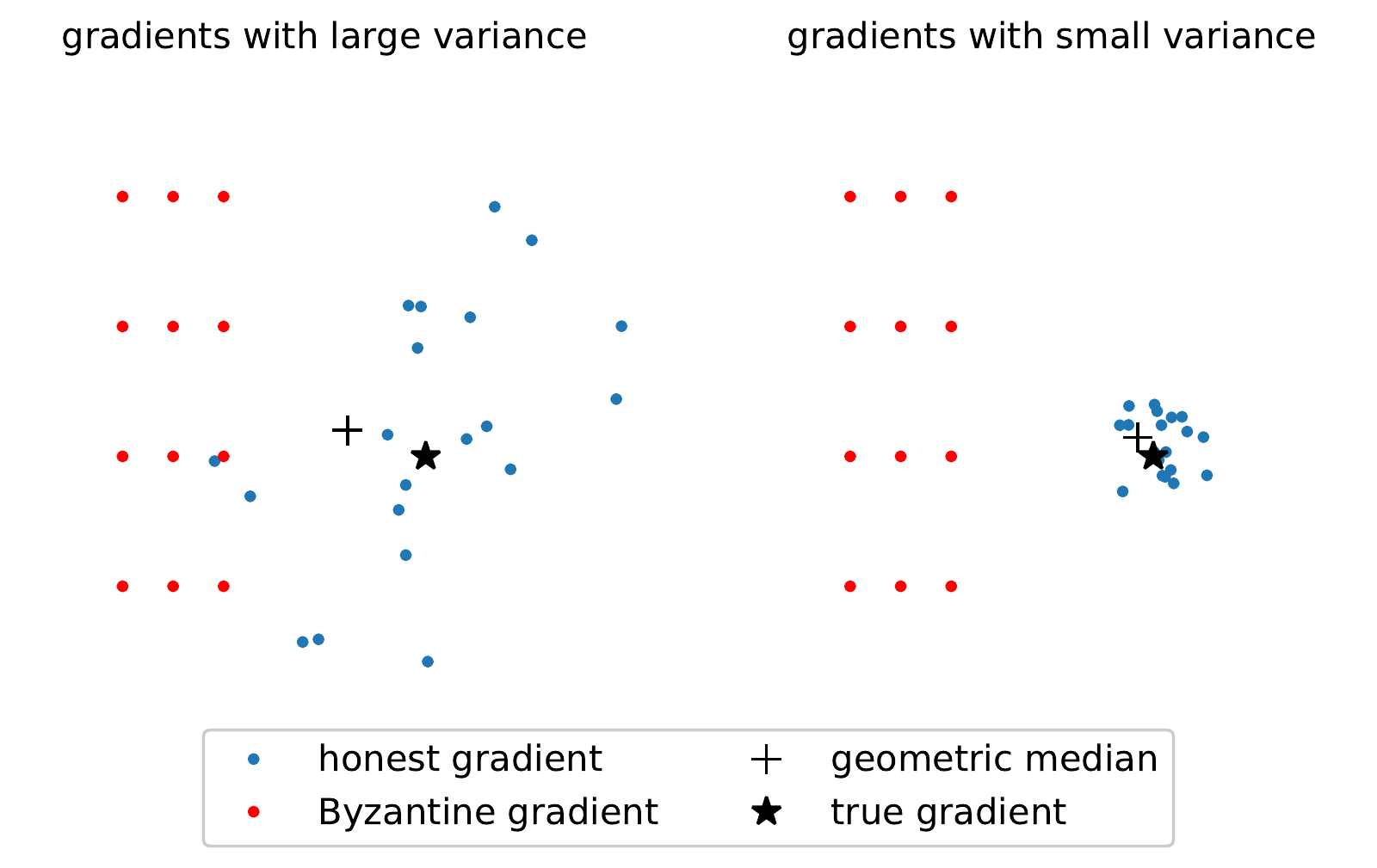}	
	\caption{Impact of stochastic gradient noise on geometric median-based robust aggregation. Blue dots denote stochastic gradients sent by the honest workers. Red dots denote malicious messages sent by the Byzantine workers. Plus signs denote the outputs of geometric median-based robust aggregation. Pentagrams denote the means of the stochastic gradients sent by the honest workers. Variance of the stochastic gradients from the honest workers is large in left and small in right.}
	\label{fig:howVarianceAffectGM}
\end{figure}

\subsection{Impact of stochastic gradient noise on robust aggregation}
\label{subsec:noise}

In distributed SGD, the stochastic gradients evaluated by honest workers are noisy because of the randomness in choosing data samples. Due to the stochastic gradient noise however, it is not always easy to distinguish the malicious messages from the stochastic gradients using just the robust aggregation rules, e.g. the geometric median. Several existing works have recognized this issue. With carefully crafted Byzantine attacks, outputs of several Byzantine attack resilient SGD algorithms can be far away from the optimal solution \cite{Xie2019FallOB}. In \cite{Chen2019DistributedSM} and \cite{Chen2018DRACO}, the workers are divided into several groups, with averages taken within groups and the geometric median obtained across groups. This approach leads to reduced variance and thus enhanced ability to distinguish malicious messages. In \cite{Blanchard2017MachineLW}, it is explicitly assumed that the ratio of the variance of stochastic gradients to the distance between iterate and optimal solution is upper-bounded.

Fig. \ref{fig:howVarianceAffectGM} shows the impact of stochastic gradient noise on geometric median-based robust aggregation. When the stochastic gradients sent by honest workers have small variance, the gap between the true mean and the aggregated value is also small; that is, the same Byzantine attacks are less effective. We will quantify this statement in our analysis of Section \ref{subsec:noise-analysis}.

Prompted by this observation, our key idea is to reduce the variance of stochastic gradients in order to enhance robustness to Byzantine attacks. In the Byzantine attack-free case, an effective approach to alleviating stochastic gradient noise in SGD is through variance reduction. By compensating for stochastic gradient noise, variance reduction techniques lead to faster convergence than SGD. For specificity, we will focus on SAGA, which reduces stochastic gradient noise for finite-sum optimization \cite{Defazio2014SAGAAF}, and we will show how SAGA can also aid robust aggregation against Byzantine attacks.

%Indeed, existing theories have long suggested that ideal robust effects strongly depend on small variances. In \cite{chen2018Byzantine}, workers are divided into several group, take average within the group and take geometric median among the group. The average step plays a role in reducing variance. In \cite{Blanchard2017MachineLW}, authors assume that the variance less that a multiple of distance from the iterate to optimal point. In conclusion, small variance is a necessary condition for well performed robust aggregation.

%Recent advances in Byzantine-resilient SGD \cite{Chen2019DistributedSM,Xie2018GeneralizedBS} are canonical works to robustify the distributed SGD by incorporating the geometric median as robust aggregation rules when the master node receives messages from the workers.
%In particular, we will focus on the application and analysis of geometric median, while other robust aggregation rules are also viable.

\section{Algorithm Development}
\label{sec:algorithm}

In this section, we first introduce distributed SAGA with mean aggregation. Then, we propose Byrd-SAGA, which replaces mean aggregation by geometric median-based robust aggregation.

\subsection{Distributed SAGA with mean aggregation}

In distributed SAGA, each worker maintains a table of stochastic gradients for all of its local data samples~\cite{Calauznes2017DistributedSM,De2016EfficientDS}. As in distributed SGD, the master node at slot $k$ sends $x^k$ to the workers, and every worker $w$ uniformly at random chooses a local data sample with index $i_w^k$ to find the stochastic gradient $f_{w,i_w^k}'(x^k)$. However, worker $w$ does not send back $f_{w,i_w^k}'(x^k)$ to the master node. Instead, it corrects $f_{w,i_w^k}'(x^k)$ by first subtracting the previously stored stochastic gradient of the $i_w^k$-th data sample, and then adding the average of the stored stochastic gradients across local data samples. Then, worker $w$ sends such a corrected stochastic gradient to the master node, and stores $f_{w,i_w^k}'(x^k)$ as the stochastic gradient of the $i_w^k$-th data sample in the table. After collecting the corrected stochastic gradients from all workers, the master node updates the model $x^{k+1}$.

To better describe distributed SAGA, let
\begin{align} \label{equation:phi}
\phi_{w,j}^{k+1}  =
\begin{cases}
\phi_{w,j}^k, \quad & j \neq i_w^k \\
x^k, \quad & j = i_w^k
\end{cases}
\end{align}
where $\phi_{w,j}^{k+1}$ is the iterate at which the most recent $f_{w,j}'$ is evaluated when slot $k$ ends. Then, $f_{w,j}'(\phi_{w,j}^k)$ refers to the previously stored stochastic gradient of the $j$-th data sample prior to slot $k$ on worker $w$, and
\begin{align}
g_w^k := f_{w, i_w^k}'(x^k)-f_{w, i_w^k}'(\phi_{w,i_w^k}^k) +\avg{j}{J} f_{w, j}'(\phi_{w,j}^k) \nonumber
\end{align}
is the corrected stochastic gradient of worker $i$ at slot $k$. The model update of SAGA is hence
\begin{align}   \label{equation:SAGAupdate}
& x^{k+1}  =  x^k - \gamma \cdot \avg{w}{W} g_w^k
\end{align}
%\begin{align}   \label{equation:SAGAupdate}
%& x^{k+1}  =  x^k - \gamma \cdot \avg{w}{W} \Big( f_{w, i_w^k}'(x^k)-f_{w, i_w^k}'(\phi_{w,i_w^k}^k) \nonumber \\
%& \hspace{10.5em} +\avg{j}{n} f_{w, j}'(\phi_{w,j}^k) \Big),
%\end{align}
where $\gamma > 0$ is the constant step size.

\subsection{Distributed SAGA with geometric median aggregation}

Here, it is useful to recall that Byzantine workers may send to the master node malicious messages, other than the corrected stochastic gradient. To account for this, the message sent from worker $w$ to the master node at slot $k$ is expressed as
%\begin{align}
%m_w^k&=
%\begin{cases}
%f_{w, i_w^k}'(x^k)-f_{w, i_w^k}'(\phi_{w,i_w^k}^k) \\
%\hspace{2em} +\avg{j}{n} f_{w, j}'(\phi_{w,j}^k), \quad w \notin\mathcal{B}, \\
%*,        \hspace{11.2em}  w \in\mathcal{B}
%\end{cases}
%\label{definition:mk}
%\end{align}
\begin{align}
m_w^k&=
\begin{cases}
g_w^k, & w \notin\mathcal{B}, \\
*,     & w \in\mathcal{B}
\end{cases}
\label{definition:mk}
\end{align}
where $*$ denotes an arbitrary $p \times 1$ vector. Similar to distributed SGD, distributed SAGA is also sensitive to Byzantine attacks. Our robust aggregation rule here is the geometric median. This leads to the proposed Byzantine attack resilient distributed (Byrd) form of SAGA in \eqref{equation:SAGAupdate}, that is given by
\begin{align}   \label{equation:SAGAupdate-robust}
x^{k+1}  =  x^k - \gamma \cdot \underset{w \in \mathcal{W}}{{\rm geomed}}\{m_w^k\}.
\end{align}

%Observe that it is not necessary for every node $w$ to calculate the average of the stored stochastic gradients of all the local data samples at every time. The reason is that the average satisfies the recursion
%$$\avg{j}{n} f_{w, j}'(\phi_{w,j}^{k+1})  =  \avg{j}{n} f_{w, j}'(\phi_{w,j}^k) + \frac{1}{n} \left( f_{w,i_w^k}'(x^k) - f_{w,i_w^k}'(\phi_{w,j}^k) \right),$$
%which can be updated with low computation cost. The main complexity of the distributed SAGA is on the storage. Every worker has to store $n$ historic stochastic gradients for its $n$ local data samples. At the cost of using more storage, SAGA reduces the gradient noise of SGD and achieves much faster convergence.

\begin{figure}
	\centering
	\includegraphics[width=0.45\textwidth]{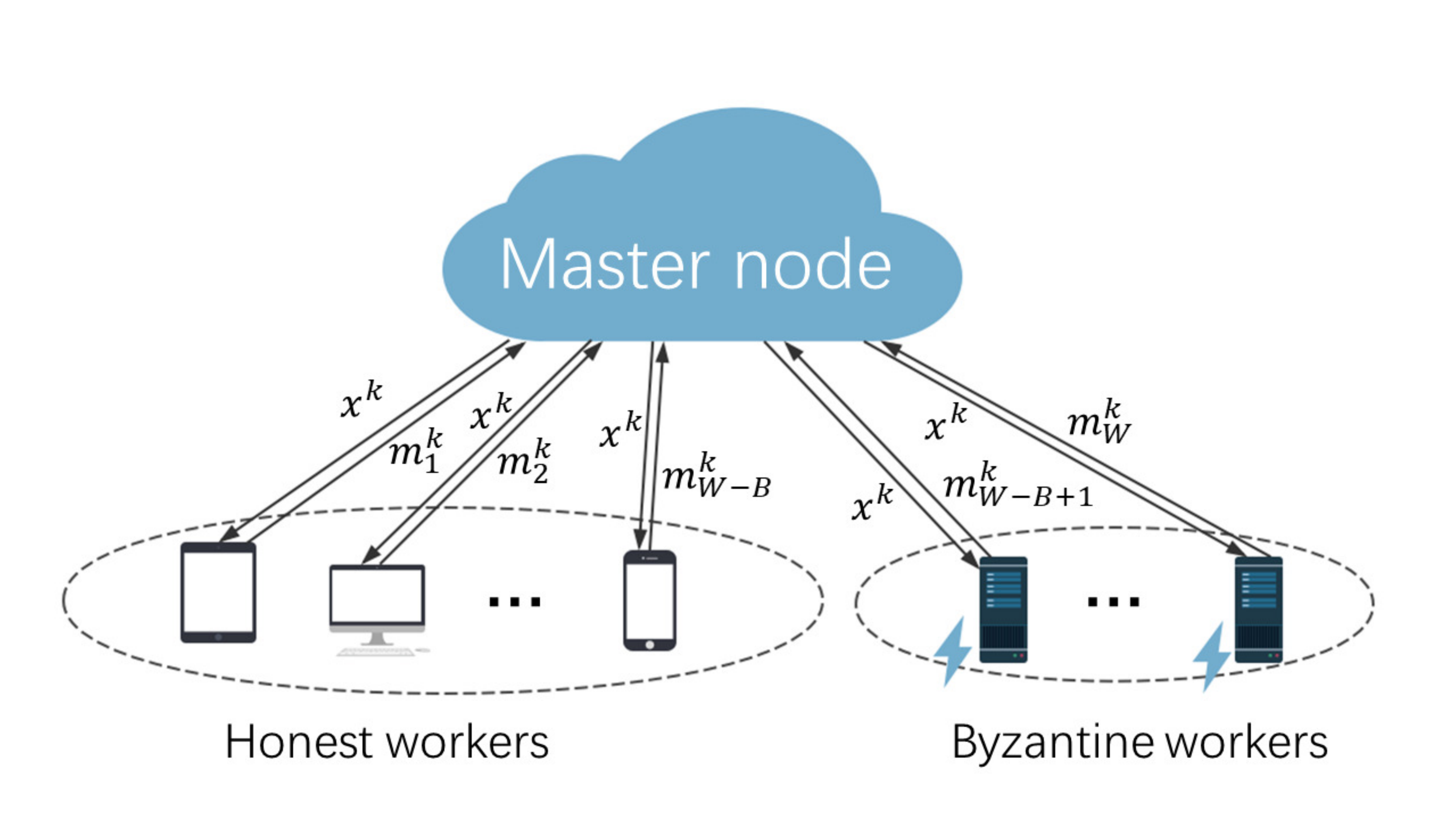}
	\caption{Illustration of Byzantine attack resilient distributed SAGA. For the ease of illustration, the honest workers are from $1$ to $W-B$ while the Byzantine attackers are from $W-B+1$ to $W$. But in practice, the identities of Byzantine attackers are unknown to the master node.}
	\label{fig:framework}
\end{figure}

%\textbf{Update by SAGA locally to improve communication efficiency and robustness}
%
%The update of SAGA is given in (\ref{equation:SAGAupdate}). But here, the synchronization is need in each iteration. Usually, the algorithm will converge in thousands of iteration and frequent communication will raise the huge communication cost. Inspired by [Efficient Distributed SGD with...], we define an additional parameter $\tau$ to denote the communication period. Each work perform $\tau$ times SAGA update and then send to result to central server. This local update mechanism improves the communication efficiency of the algorithm.
%
%Another benefit of raise $\tau$ is to improve the robustness of system. Byzantine attack will occur only in process of communication. So longer communication period may reduce the number of communication and in turn reduce the number of communication. But the less communication may lead to disagreement and reduce the accuracy. So it's necessary to trade off between the communication efficiency and the precision in practice.

The proposed Byzantine attack resilient distributed SAGA, abbreviated as Byrd-SAGA, is listed step-by-step under Algorithm \ref{algorithm:Byrd-SAGA}, and illustrated in Fig. \ref{fig:framework}. There are various implementations of the distributed SAGA. For example, \cite{De2016EfficientDS} proposed to store the tables of stochastic gradients in the master node. The workers only need to upload the stochastic gradients and their indexes, while the master node performs the aggregation. This setup is also vulnerable to Byzantine attacks, since the Byzantine attackers may upload incorrect stochastic gradients. The proposed robust aggregation rule can also be applied therein.

%At time $0$, the master node and the honest workers initialize the model $x^0$ to the same value. Then, every honest worker $w$ initializes its stochastic gradient storage $f_{w, j}'(\phi_{w,j}) =  f_{w, j}'(x^0)$ for all data samples $j \in \{1,\dots, n\}$. It also initializes the average stochastic gradient $\bar{g}_w^1=\frac{1}{n}\sum_{j=0}^{n}f_{w, j}'(x^0)$ and sends the value to the master node. Upon receiving all $\bar{g}_w^1$, which can be faulty if $w \in \mathcal{B}$, the master node updates $x^1  =  x^0 - \gamma \cdot {\rm geomed}_{w \in \mathcal{W}} \{ \bar{g}_w^1 \}$. Then, at every time $k = 1, 2, \cdots $, the master node first broadcasts $x^k$ to all the workers. Every honest worker node $w$ accesses the $i_w^k$-th data sample where $i_w^k$ is uniformly at random chosen from $\{1, \cdots, n\}$. After that, every honest worker node $w$ updates $m_w^k = f_{w, i_w^k}'(x^k)-f_{w, i_w^k}'(\phi_{w,i_w^k})+\bar{g}_w^k$ and $\bar{g}_w^{k+1} = \bar{g}_w^k+\frac{1}{n}(f_{w, i_w^k}'(x^k)-f_{w, i_w^k}'(\phi_{w,i_w^k}))$, followed by sending $m_w^k$ to the master node. It also stores the latest stochastic gradient as $f_{w, i_w^k}'(\phi_{w,i_w^k}) =  f_{w, i_w^k}'(x^k)$. Upon receiving all $m_w^k$, which again can be faulty if $w \in \mathcal{B}$, the master node updates $x^{k+1}  =  x^k - \gamma \cdot {\rm geomed}_{w \in \mathcal{W}} \{ m_w^k \}$.

\begin{algorithm}[!tb]
	\caption{Byzantine Attack Resilient Distributed SAGA}
	\label{algorithm:Byrd-SAGA}
	% the number in the bracket parenthesis control whether display the line number.
	\begin{algorithmic}[0]
		\REQUIRE step size $\gamma$; number of workers $W$; number of data samples $J$ on every honest worker $w$
		\STATE Master node and honest workers initialize $x^0$
		\FORALL {honest worker $w$}
		\FOR {$j \in \{1,\dots, J\}$}
		\STATE Initializes gradient storage $f_{w, j}'(\phi_{w,j}) =  f_{w, j}'(x^0)$
		\ENDFOR
		\STATE Initializes average gradient $\bar{g}_w^1=\frac{1}{J}\sum_{j=1}^{J}f_{w, j}'(x^0)$
		\STATE Sends $\bar{g}_w^1$ to master node
		\ENDFOR
		\STATE Master node updates $x^1  =  x^0 - \gamma \cdot {\rm geomed}_{w \in \mathcal{W}} \{ \bar{g}_w^1 \}$
		\FORALL {$k = 1, 2, \cdots $}
		\STATE Master node broadcasts $x^k$ to all workers
		\FORALL {honest worker node $w$}
		\STATE Samples $i_w^k$ from $\{1, \cdots, J\}$ uniformly at random
		\STATE Updates $m_w^k = f_{w, i_w^k}'(x^k)-f_{w, i_w^k}'(\phi_{w,i_w^k})+\bar{g}_w^k$
		\STATE Sends $m_w^k$ to master node
		\STATE Updates $\bar{g}_w^{k+1} = \bar{g}_w^k+\frac{1}{J}(f_{w, i_w^k}'(x^k)-f_{w, i_w^k}'(\phi_{w,i_w^k}))$
		\STATE Stores gradient $f_{w, i_w^k}'(\phi_{w,i_w^k}) =  f_{w, i_w^k}'(x^k)$
		\ENDFOR \\
		\STATE Master node updates $x^{k+1}  =  x^k - \gamma \cdot {\rm geomed}_{w \in \mathcal{W}} \{ m_w^k \}$
		\ENDFOR
	\end{algorithmic}
\end{algorithm}

Robust aggregations other than the geometric median are available, including the median \cite{Xie2018GeneralizedBS}, Krum \cite{Blanchard2017MachineLW}, marginal trimmed mean \cite{Yin2018ByzantineRobustDL}, and iterative filtering \cite{Su2018Securing}. In the median for instance, the aggregation outputs the element-wise median of $\{m_w^k\}$; while in the Krum, the aggregation outputs
\begin{align*}
\underset{w\in\mathcal{W}}{{\rm Krum}} \{m_w^k\} = m_{w^*}, \
w^*=\arg\min_{w\in\mathcal{W}}\sum_{w\to w'}\|m_w^k-m_{w'}^k\|^2
\end{align*}
where $w\to w'$ $(w \neq w')$ selects the indexes $w'$ of the $W-B-2$ nearest neighbors of $m_w^k$ in $\{m_{w'}^k\}$. Note that Krum needs to know $B$, the number of Byzantine attackers, in advance. In addition, other variance reduction techniques, such as mini-batch \cite{Goyal2017Accurate}, SAG \cite{Schmidt2017MinimizingFS}, SVRG \cite{Johnson2013AcceleratingSG}, SAGA \cite{Defazio2014SAGAAF}, SDCA \cite{ShalevShwartz2013StochasticDC}, SARAH \cite{Nguyen2017Sarah} and Katyusha \cite{Zhu2017KatyushaTF}, are also available to alleviate the gradient noise. Here we opted for the combination of geometric median and SAGA. Extending the current work to other robust aggregation rules and variance reduction techniques, is in our future research agenda.

%In the experiments, we will numerically demonstrate the robustness of these rules to Byzantine attacks.

\begin{Remark}
Computing the geometric median involves solving an optimization problem in the form of \eqref{definition:geo}. Since it is costly to obtain the exact geometric median, one is typically
satisfied with an $\epsilon$-approximate value \cite{Weiszfeld2009}. We say that
$z^*_\epsilon$ is an $\epsilon$-approximate geometric median of $\{z, z \in \mathcal{Z}\}$ if
	\begin{align}
	\sum_{z \in \mathcal{Z}}\|z^*_\epsilon - z\|
	\le \inf_{y} \sum_{z \in \mathcal{Z}}\|y-z\|+\epsilon.
	\end{align}
	We shall show that the $\epsilon$-approximation only slightly affects the convergence of Byrd-SAGA.
\end{Remark}

\section{Theoretical Analysis}
\label{sec:analysis}

In this section, we theoretically justify the intuitive idea that reducing stochastic gradient noise helps identify malicious messages in robust aggregation, specifically to the geometric median in this paper. We prove that our Byrd-SAGA converges to a neighborhood of the optimal solution at a linear rate under Byzantine attacks, and the asymptotic learning error is determined by the number of Byzantine attackers. Due to the page limit, proofs are delegated to the full version of this paper\footnote{\url{https://github.com/MrFive5555/Byrd-SAGA/blob/master/Full.pdf}}.

%Proofs are delegated to the appendices.

\subsection{Importance of reducing stochastic gradient noise}
\label{subsec:noise-analysis}

Here, we quantify the role of stochastic gradient noise on the geometric median aggregation. Towards this objective, consider the set of messages $\mathcal{Z}$ sent by all workers in $\mathcal{W}$, and the set $\mathcal{Z}'$ of malicious messages sent by the Byzantine attackers in $\mathcal{B}$. Further, let $\bar{z}$ denote the true gradient given by the ensemble average of stochastic gradients. Using these definitions, the ensuing lemma bounds the mean-square error of the geometric median relative to the true gradient.

\begin{Lemma} \label{lemma:geometric_error} (Concentration property)
	Let $\{z, z \in \mathcal{Z}\}$ be a subset of random vectors distributed in a normed vector space. If $\mathcal{Z}'\subseteq \mathcal{Z}$ and $|\mathcal{Z}'|< \frac{|\mathcal{Z}|}{2}$, then it holds that
	\begin{align} \label{inequality:geovoting-error-exp}
	    & E \|\underset{z \in \mathcal{Z}}{{\rm geomed}}\{z\}- \bar{z}\|^2  \\
	\le & C_\alpha^2 \frac {\sum_{z\notin \mathcal{Z}'}{ E\|z-E z\|^2 }} {|\mathcal{Z}|-|\mathcal{Z}'|}
	+ C_\alpha^2 \frac {\sum_{z\notin \mathcal{Z}'}{ \|E z- \bar{z}\|^2}} {|\mathcal{Z}|-|\mathcal{Z}'|} \nonumber
	\end{align}
	where
	\begin{align}
	\bar{z}:=\frac {\sum_{z \notin \mathcal{Z}'} E z} {|\mathcal{Z}|-|\mathcal{Z}'|} \nonumber
	\end{align}
while $C_\alpha :=\frac{2-2\alpha}{1-2\alpha}$, and $\alpha :=\frac{|\mathcal{Z}'|}{|\mathcal{Z}|}$.
\end{Lemma}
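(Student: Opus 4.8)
The plan is to reduce the stochastic statement to a purely deterministic, realization-wise inequality about the geometric median, and only then pass to expectations. Observe first that the two terms on the right-hand side of \eqref{inequality:geovoting-error-exp} recombine: since $\bar z$ is deterministic and $E[z-Ez]=0$, the bias--variance identity $E\|z-\bar z\|^2 = E\|z-Ez\|^2 + \|Ez-\bar z\|^2$ holds for each honest $z\notin\mathcal{Z}'$, so the claimed bound is equivalent to the single mean-square-deviation estimate
\begin{align}
E\Big\|\underset{z\in\mathcal{Z}}{{\rm geomed}}\{z\} - \bar z\Big\|^2 \le C_\alpha^2\,\frac{1}{|\mathcal{Z}|-|\mathcal{Z}'|}\sum_{z\notin\mathcal{Z}'} E\|z - \bar z\|^2 . \nonumber
\end{align}
This is what I would actually prove, restoring the two-term form at the very end.

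The heart of the argument is a deterministic concentration bound that holds for every realization of the vectors: writing $z_* = {\rm geomed}_{z\in\mathcal{Z}}\{z\}$ and $r=\|z_*-\bar z\|$, I claim
\begin{align}
\|z_* - \bar z\| \le C_\alpha\,\frac{1}{|\mathcal{Z}|-|\mathcal{Z}'|}\sum_{z\notin\mathcal{Z}'}\|z - \bar z\|. \nonumber
\end{align}
To establish it, I would start from the optimality of $z_*$ in \eqref{definition:geo}, namely $\sum_{z\in\mathcal{Z}}\|z_*-z\| \le \sum_{z\in\mathcal{Z}}\|\bar z-z\|$, and split both sums into the honest indices $z\notin\mathcal{Z}'$ and the Byzantine indices $z\in\mathcal{Z}'$. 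For each honest term the triangle inequality gives $\|z_*-z\|-\|\bar z-z\| \ge r - 2\|z-\bar z\|$, while for each Byzantine term the reverse triangle inequality gives the crude but sufficient bound $\|z_*-z\|-\|\bar z-z\| \ge -r$. Substituting these lower bounds into the rearranged optimality inequality and collecting the coefficient of $r$ yields $(|\mathcal{Z}|-2|\mathcal{Z}'|)\,r \le 2\sum_{z\notin\mathcal{Z}'}\|z-\bar z\|$; dividing by $|\mathcal{Z}|-2|\mathcal{Z}'|>0$ (positive precisely because $|\mathcal{Z}'|<|\mathcal{Z}|/2$) and rewriting the prefactor in terms of $\alpha$ produces the constant $C_\alpha=\frac{2-2\alpha}{1-2\alpha}$.

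With the deterministic bound in hand the rest is routine. I would square both sides and apply Cauchy--Schwarz (Jensen) to push the square inside the average, $\big(\frac{1}{|G|}\sum_{z\notin\mathcal{Z}'}\|z-\bar z\|\big)^2 \le \frac{1}{|G|}\sum_{z\notin\mathcal{Z}'}\|z-\bar z\|^2$ with the honest count $|G|:=|\mathcal{Z}|-|\mathcal{Z}'|$; then take expectations (the deterministic inequality holds pointwise, so this is legitimate even though $z_*$ is random); and finally invoke the bias--variance identity above to split each $E\|z-\bar z\|^2$ into its variance and displacement parts, recovering \eqref{inequality:geovoting-error-exp} exactly.

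I expect the deterministic concentration inequality to be the only real obstacle; everything downstream is algebra. Its delicate points are choosing the right asymmetric triangle-inequality bounds for the honest versus Byzantine terms so that the coefficient of $r$ stays positive, and tracking the two averaging denominators ($|\mathcal{Z}|$ versus $|\mathcal{Z}|-|\mathcal{Z}'|$) so the prefactor collapses to exactly $C_\alpha$. Two caveats are worth flagging: the bias--variance decomposition uses the inner-product structure of $\mathbb{R}^p$, so the mean-square statement is genuinely Euclidean rather than for an arbitrary norm; and since the argument uses only $F(z_*)\le F(\bar z)$, it carries over verbatim to an $\epsilon$-approximate geometric median at the cost of an additive $\epsilon$ inside the optimality inequality, consistent with the Remark.
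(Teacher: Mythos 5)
Your proof is correct and takes essentially the same route as the paper: the paper first establishes the bound on $E\|{\rm geomed}_{z\in\mathcal{Z}}\{z\}\|^2$ (its auxiliary Lemma in the appendix) via exactly your honest/Byzantine triangle-inequality split and the optimality comparison, then applies it to the shifted vectors $z-\bar z$ and finishes with the same Cauchy--Schwarz step and bias--variance decomposition, so your ``deterministic bound against $y=\bar z$'' is just that argument with the translation made explicit. Your caveat that the final decomposition $E\|z-\bar z\|^2=E\|z-Ez\|^2+\|Ez-\bar z\|^2$ relies on inner-product structure (so the mean-square statement is genuinely Euclidean rather than valid in an arbitrary normed space) is a fair observation that applies equally to the paper's own proof.
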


The left-hand side of \eqref{inequality:geovoting-error-exp} is the mean-square error of the geometric median relative to the true gradient, while the right-hand side is the sum of two terms. The first is determined by the variances of the local stochastic gradients sent by the honest workers (inner variation), while the second term is determined by the variations of the local gradients at the honest workers with respect to the true gradient (outer variation). In the Byzantine attack resilient SGD, the upper bound can be large due to the large stochastic gradient noise of SGD. Through reducing the stochastic gradient noise in terms of either inner variation or outer variation, we are able to attain improved accuracy under malicious attacks.

\subsection{Convergence of Byrd-SAGA and comparison with Byzantine attack resilient SGD}
\label{subsec:convergence}

Here, we establish convergence of Byrd-SAGA, and theoretically justify that, through reducing the impact of inner variation, Byrd-SAGA enjoys superior robustness to Byzantine attacks. We begin with several needed assumptions on the functions $\{f_{w,j}\}$.

\begin{Assumption} (Strong convexity and Lipschitz continiuty of gradients)   \label{assumption:muL}
	The function $f$ is $\mu$-strongly convex and has $L$-Lipschitz continuous gradients, which amounts to requiring that for any $x, y \in \mathbb{R}^p$, it holds that
	\begin{align}
	f(x)\ge f(y)+\langle f'(y),x-y\rangle +\frac \mu 2\|x-y\|^2 \label{eq:mu}
	\end{align}
	and
	\begin{align}
	\|f'(x)-f'(y)\|\le L\|x-y\|. \label{eq:L}
	\end{align}
\end{Assumption}

\begin{Assumption} (Bounded outer variation)  \label{assumption:outterVariance}
	For any $x \in \mathbb{R}^p$, variation of the aggregated gradients at the honest workers with respect to the overall gradient is upper-bounded by
	\begin{align}
	E_{w\notin\mathcal{B}}\|f_w'(x)-f'(x)\|^2\le\delta^2. \label{eq:outterVariance}
	\end{align}
\end{Assumption}

\begin{Assumption} (Bounded inner variation)  \label{assumption:innerVariance}
	For every honest worker $w$ and any $x \in \mathbb{R}^p$, the variation of its stochastic gradients with respect to its aggregated gradient is upper-bounded by
	\begin{align}
	E_{i_w^k}\|f_{w,i_w^k}'(x)-f_{w}'(x)\|^2\le\sigma^2,\quad \forall w \notin B. \label{eq:innerVariance}
	\end{align}
\end{Assumption}

Assumption \ref{assumption:muL} is standard in convex analysis. Assumptions \ref{assumption:outterVariance} and \ref{assumption:innerVariance} bound the variation of gradients and the variation of stochastic gradients within the honest workers, respectively \cite{Tang2018D2DT}. For instance, most of the existing Byzantine attack resilient SGD algorihtms assume that the stochastic gradients at the honest workers are independently and identically distributed (i.i.d.) with finite variance, such that the outer variation $\delta^2$ in Assumption \ref{assumption:outterVariance} is proportional to $1/J$ and the inner variation $\sigma^2$ in Assumption \ref{assumption:innerVariance} is finite. In the analysis of Byzantine attack resilient SGD, \emph{both} outer and inner variations must be bounded. Interestingly, inner variation will turn out not to impact Byrd-SAGA, and Assumption \ref{assumption:innerVariance} will no longer be necessary in its analysis.

To simplify notation, we will henceforth use $E$ to represent the expectation with respect to all random variables $i_w^k$.

The presence of geometric median makes Byrd-SAGA analysis challenging. Specifically, for every honest worker $w \notin \mathcal{B}$, $m_i^k$ is an unbiased estimate of $f_w'(x^k)$, meaning
%\begin{align}
%&E_{i_w^k} [m_i^k] \nonumber\\
%= &E_{i_w^k}[f_{w, i_w^k}'(x^k)-f_{w, i_w^k}'(\phi_{w,i_w^k}^k)+\avg{j}{n} f_{w, j}'(\phi_{w,j}^k)] \nonumber\\
%=&f_w'(x^k).
%\label{equation:unbiasUpdadte-00}
%\end{align}
\begin{align}
E m_i^k =f_w'(x^k).
\label{equation:unbiasUpdadte-00}
\end{align}
Averaging \eqref{equation:unbiasUpdadte-00} over all honest workers $w \notin \mathcal{B}$, we have
\begin{align}
\avgHonest E m_i^k = \avgHonest f_w'(x^k) =f'(x^k).
\label{equation:unbiasUpdadte}
\end{align}
From \eqref{equation:unbiasUpdadte}, we observe that the mean of $\{m_i^k\}$ over all the honest workers $w \notin \mathcal{B}$ is an unbiased estimate of $f'(x^k)$. Nevertheless, the geometric median of $\{m_i^k\}$, even only over all the honest workers $w \notin \mathcal{B}$ and calculated accurately, is a biased estimate of $f'(x^k)$. This is the main challenge in adapting the proof of SAGA to that of Byrd-SAGA.

The following theorem asserts that Byrd-SAGA converges to a neighborhood of the optimal solution $x^*$ at a linear rate, with the asymptotic learning error determined by the number of Byzantine attackers.

\begin{Theorem}
	Under Assumptions \ref{assumption:muL} and \ref{assumption:outterVariance}, if the number of Byzantine attackers satisfies $B < \frac{W}{2}$ and the step size satisfies
	$$\gamma \le \frac{\mu}{4\sqrt{5}J^{2}C_\alpha L^2}$$
	then for Byrd-SAGA with $\epsilon$-approximate geometric median aggregation, it holds that
	\begin{align}
	E\|x^k-x^*\|^2 \le (1- \frac{\gamma\mu}{2})^{k} \Delta_1
	+\Delta_2
	\label{inequality:xk}
	\end{align}
	where
	\begin{align} \label{definition:Delta1}
	\Delta_1 := \|x^0-x^*\|^2 - \Delta_2
	\end{align}
	\begin{align} \label{definition:Delta2}
	\hspace{-2em} \Delta_2 :=
	\frac{10}{\mu^2}
	\left(
	C_\alpha^2\delta^2
	+\frac{\epsilon^2}{(W-2B)^2}
	\right).
	\end{align}
	\label{theorem:convergence}
\end{Theorem}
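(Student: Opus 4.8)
The plan is to establish a linear-convergence recursion of the form
\begin{align}
E\|x^{k+1}-x^*\|^2 \le \left(1-\tfrac{\gamma\mu}{2}\right) E\|x^k-x^*\|^2 + \text{(error terms)} \nonumber
\end{align}
and then unroll it to obtain \eqref{inequality:xk}. First I would expand the squared distance using the Byrd-SAGA update \eqref{equation:SAGAupdate-robust}, writing
\begin{align}
\|x^{k+1}-x^*\|^2 = \|x^k-x^*\|^2 - 2\gamma\langle g^k_{\mathrm{gm}}, x^k-x^*\rangle + \gamma^2\|g^k_{\mathrm{gm}}\|^2 \nonumber
\end{align}
where $g^k_{\mathrm{gm}}$ denotes the (possibly $\epsilon$-approximate) geometric median of $\{m^k_w\}_{w\in\mathcal{W}}$. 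The central difficulty, flagged in the discussion around \eqref{equation:unbiasUpdadte}, is that $g^k_{\mathrm{gm}}$ is a \emph{biased} estimate of $f'(x^k)$, so the inner-product term cannot be handled by simply taking conditional expectation and invoking strong convexity as in vanilla SAGA. The key device is to decompose $g^k_{\mathrm{gm}} = \bar{g}^k + (g^k_{\mathrm{gm}} - \bar{g}^k)$, where $\bar g^k := \frac{1}{W-B}\sum_{w\notin\mathcal{B}} m^k_w$ is the honest-worker mean, which \emph{is} unbiased for $f'(x^k)$. The honest-mean part is then treated by the standard SAGA machinery (strong convexity plus $L$-smoothness on the cross term), while the deviation $g^k_{\mathrm{gm}}-\bar g^k$ is exactly what Lemma~\ref{lemma:geometric_error} controls.

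Next I would invoke Lemma~\ref{lemma:geometric_error} with $\mathcal{Z}=\{m^k_w\}_{w\in\mathcal{W}}$ and $\mathcal{Z}'$ the Byzantine messages, so that $\alpha=B/W<\tfrac12$, giving
\begin{align}
E\|g^k_{\mathrm{gm}}-\bar z^k\|^2 \le C_\alpha^2\,\frac{\sum_{w\notin\mathcal{B}} E\|m^k_w - E m^k_w\|^2}{W-B} + C_\alpha^2\,\frac{\sum_{w\notin\mathcal{B}}\|E m^k_w - \bar z^k\|^2}{W-B}. \nonumber
\end{align}
Crucially, since $E m^k_w = f'_w(x^k)$ for each honest worker, the \emph{outer-variation} term is bounded by $C_\alpha^2\delta^2$ via Assumption~\ref{assumption:outterVariance}, and this is where the $C_\alpha^2\delta^2$ in \eqref{definition:Delta2} originates. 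The $\epsilon$-approximation contributes an additive $\epsilon^2/(W-2B)^2$ term, which I would trace through the approximate-geomed inequality in the Remark (the factor $W-2B$ arising as the effective majority margin). The inner-variation term is the essential advantage of SAGA: rather than being an irreducible constant $\sigma^2$ as in robust SGD, here $m^k_w-f'_w(x^k)$ is the SAGA correction error, whose variance is driven by the gradient-table staleness $\sum_j \|f'_{w,j}(\phi^k_{w,j}) - f'_{w,j}(x^*)\|^2$ and hence vanishes as $x^k,\phi^k_{w,j}\to x^*$.

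To close the argument I would introduce a Lyapunov function augmenting $\|x^k-x^*\|^2$ with a weighted sum of the table errors $\sum_{w\notin\mathcal{B}}\sum_{j} E\|f'_{w,j}(\phi^k_{w,j}) - f'_{w,j}(x^*)\|^2$, which is the standard SAGA potential. Using $L$-smoothness to bound each stochastic-gradient difference and the uniform-sampling law that each table entry is refreshed with probability $1/J$, I would show the table-error term contracts, and then choose the Lyapunov weights together with the step-size bound $\gamma\le \mu/(4\sqrt5 J^2 C_\alpha L^2)$ so that the combined potential satisfies a one-step contraction with factor $1-\gamma\mu/2$ modulo the additive constant $\tfrac{10}{\mu^2}(C_\alpha^2\delta^2+\epsilon^2/(W-2B)^2)$. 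Unrolling the recursion from $k=0$ and identifying the fixed point with $\Delta_2$ yields \eqref{inequality:xk} with $\Delta_1=\|x^0-x^*\|^2-\Delta_2$. I expect the main obstacle to be the interaction between the geometric-median bias and the SAGA potential: the $J^2$ factor in the step-size restriction strongly suggests that the $C_\alpha$-amplified geomed error couples the table-staleness contraction to the iterate contraction, and balancing these two so that a single factor $1-\gamma\mu/2$ governs both requires a careful, possibly delicate, choice of the Lyapunov weights and repeated use of Young's inequality to split cross terms.
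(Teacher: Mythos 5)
Your proposal is correct and follows essentially the same route as the paper: split the update into an ideal full-gradient step plus the deviation of the ($\epsilon$-approximate) geometric median from $f'(x^k)$, bound that deviation via the concentration lemma (outer variation giving $C_\alpha^2\delta^2$, the $\epsilon$-term giving $\epsilon^2/(W-2B)^2$, and the inner variation controlled by table staleness through $L$-smoothness), then contract a Lyapunov function and telescope. The only material difference is your choice of SAGA potential (gradient-table errors at $x^*$) versus the paper's iterate-distance potential $S^k=\frac{1}{W-B}\sum_{w\notin\mathcal{B}}\frac{1}{J}\sum_{j}\|x^k-\phi_{w,j}^k\|^2$, which is an equivalent bookkeeping choice and does not change the argument.
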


In \eqref{inequality:xk}, the constant of convergence rate is given by
$$1-\frac {\gamma \mu} {2} \geq 1 - \frac {1} {4\sqrt{5}J^{2}C_\alpha \frac{L^2}{\mu^2}}$$
which is close to $1$ when $J$ (the number of data samples at each worker) and $\frac{L}{\mu}$ (the condition number of functions) are large. Observe that $C_\alpha$ is monotonically increasing when the portion of Byzantine attackers $\alpha$ increases. Therefore, \eqref{inequality:xk} shows that Byrd-SAGA converges slower as the number of Byzantine attackers grows. Correspondingly, the theoretical upper bound of step size $\gamma$ is small when $J$ and $C_\alpha$ are large. The asymptotic learning error $\Delta_2$ in \eqref{definition:Delta2} is also monotonically increasing when $C_\alpha$ (and hence the number of Byzantine attackers) increases.

To demonstrate the superior robustness of Byrd-SAGA, we also establish the convergence of Byzantine attack resilient SGD with constant step size as a benchmark. As in Theorem \ref{theorem:convergence}, the convergence of Byzantine attack resilient SGD is in the mean-square error sense. This is different from \cite{Chen2019DistributedSM}, where convergence is asserted in the high probability sense.

\begin{Theorem}
	Under Assumptions \ref{assumption:muL}, \ref{assumption:outterVariance} and \ref{assumption:innerVariance}, if the number of Byzantine attackers is $B < \frac{W}{2}$ and the step size satisfies $$\gamma < \frac{\mu}{2L^2}$$ then for Byzantine attack resilient SGD with $\epsilon$-approximate geometric median aggregation, it holds that
	\begin{align}
	E\|x^k-x^*\|^2 \le (1- \gamma\mu)^{k} \Delta_1'
	+\Delta_2'
	\label{inequality:xkSGD}
	\end{align}
	where
	\begin{align}
	\label{definition:Delta1SGD}
	\Delta_1' &:= \|x^0-x^*\|^2 -\Delta_2'
	\end{align}
	\begin{align}
	\label{definition:Delta2SGD}
	\Delta_2' &:= \frac{4}{\mu^2}\left(
	C_\alpha^2\sigma^2
	+C_\alpha^2\delta^2
	+\frac{\epsilon^2}{(W-2B)^2}
	\right).
	%--------------------
	\end{align}.
	\label{theorem:convergenceSGD}
\end{Theorem}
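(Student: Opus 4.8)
The plan is to establish a one-step contraction of the form $E\|x^{k+1}-x^*\|^2\le(1-\gamma\mu)\,E\|x^k-x^*\|^2+\gamma\mu\Delta_2'$ and then unroll it; unlike the Byrd-SAGA theorem, no gradient-table Lyapunov function is needed, since here the recursion lives directly on $\|x^k-x^*\|^2$. The starting point is the update $x^{k+1}=x^k-\gamma g^k$ with $g^k := {\rm geomed}_{w\in\mathcal W}\{m_w^k\}$ (its $\epsilon$-approximate version), so that $\|x^{k+1}-x^*\|^2=\|x^k-x^*\|^2-2\gamma\langle g^k,x^k-x^*\rangle+\gamma^2\|g^k\|^2$. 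The central idea is to write $g^k=f'(x^k)+e^k$ with $e^k$ the aggregation error, and to control $e^k$ through Lemma \ref{lemma:geometric_error}. Applying that lemma with $\mathcal Z=\{m_w^k\}_{w\in\mathcal W}$ and $\mathcal Z'$ the set of Byzantine messages, the reference point $\bar z$ equals $\frac{1}{W-B}\sum_{w\notin\mathcal B}E m_w^k=\frac{1}{W-B}\sum_{w\notin\mathcal B}f_w'(x^k)=f'(x^k)$, because uniform sampling makes each honest message unbiased for $f_w'(x^k)$. The two terms on the right-hand side of \eqref{inequality:geovoting-error-exp} are then exactly the inner variation, bounded by $\sigma^2$ via Assumption \ref{assumption:innerVariance}, and the outer variation, bounded by $\delta^2$ via Assumption \ref{assumption:outterVariance}, so that the exact median satisfies $E\|e^k\|^2\le C_\alpha^2(\sigma^2+\delta^2)$.

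For the descent itself I would first bound the noise-free gradient step. Writing $x^{k+1}-x^*=\big((x^k-x^*)-\gamma f'(x^k)\big)-\gamma e^k$, strong convexity of Assumption \ref{assumption:muL} (which gives $\langle f'(x^k),x^k-x^*\rangle\ge\mu\|x^k-x^*\|^2$ once $f'(x^*)=0$ is used) together with $L$-Lipschitz continuity (which gives $\|f'(x^k)\|\le L\|x^k-x^*\|$) yields $\|(x^k-x^*)-\gamma f'(x^k)\|^2\le(1-2\gamma\mu+\gamma^2 L^2)\|x^k-x^*\|^2$. The step-size restriction $\gamma<\frac{\mu}{2L^2}$ forces $\gamma^2 L^2<\frac{\gamma\mu}{2}$, so this clean contraction is at most $(1-\frac{3\gamma\mu}{2})\|x^k-x^*\|^2$, leaving a $\frac{\gamma\mu}{2}$ margin. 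The crucial difference from textbook SGD appears in the cross term: because the geometric median is biased, $E[e^k\,|\,\mathcal F^k]\neq0$ where $\mathcal F^k$ is the history up to slot $k$, so I cannot argue that it vanishes in expectation. Instead I bound $|\langle (x^k-x^*)-\gamma f'(x^k),\,E[e^k|\mathcal F^k]\rangle|$ by Cauchy--Schwarz and Jensen in terms of $\sqrt{E[\|e^k\|^2|\mathcal F^k]}$, and split it with Young's inequality, spending the $\frac{\gamma\mu}{2}$ margin to keep the coefficient of $\|x^k-x^*\|^2$ at $1-\gamma\mu$ while routing the remainder into the additive floor.

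The $\epsilon$-approximation is handled by one extra perturbation step: replacing the exact geometric median by an $\epsilon$-approximate minimizer of $\sum_{w}\|y-m_w^k\|$ displaces the output by an amount governed by the curvature of that objective near its minimizer, which scales like the net number of honest votes $W-2B$; squaring produces the additional $\frac{\epsilon^2}{(W-2B)^2}$ term and gives $E\|e^k\|^2\le C_\alpha^2(\sigma^2+\delta^2)+\frac{\epsilon^2}{(W-2B)^2}$. Substituting this into the descent and taking total expectation produces $E\|x^{k+1}-x^*\|^2\le(1-\gamma\mu)\,E\|x^k-x^*\|^2+\gamma\mu\Delta_2'$ with $\Delta_2'$ as in \eqref{definition:Delta2SGD}. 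Unrolling this linear recursion, equivalently shifting it by its fixed point $\Delta_2'$, yields \eqref{inequality:xkSGD} with $\Delta_1'=\|x^0-x^*\|^2-\Delta_2'$, the constant being pinned down by the $k=0$ identity $\Delta_1'+\Delta_2'=\|x^0-x^*\|^2$.

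I expect the main obstacle to be the biasedness of the geometric median: unlike the plain mean of vanilla SGD, $E[g^k|\mathcal F^k]$ is not $f'(x^k)$, so the usual argument that the inner-product noise term disappears in expectation is unavailable, and the entire mean-square deviation---bias plus variance together---must be absorbed through Lemma \ref{lemma:geometric_error} and Young's inequality while still keeping the per-step coefficient strictly below one, which is precisely what constrains $\gamma$. A secondary technical point is tracking constants so that the inner variation enters as $C_\alpha^2\sigma^2$; this is exactly the term that variance reduction eliminates in Theorem \ref{theorem:convergence}, so the contrast between the two noise floors $\Delta_2$ and $\Delta_2'$ is the analytical payoff of the whole comparison.
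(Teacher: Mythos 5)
Your proposal follows essentially the same route as the paper: the same splitting of $x^{k+1}-x^*$ into the exact gradient step plus the aggregation error, the same $(1-2\gamma\mu+\gamma^2L^2)$ contraction from strong convexity and Lipschitz gradients, a Young-type absorption of the biased cross term with the $\gamma\mu/2$ margin, the concentration lemma applied with $\bar z=f'(x^k)$ to bound the error by $C_\alpha^2(\sigma^2+\delta^2)$ plus the $\epsilon$-perturbation term, and a final unrolling around the fixed point $\Delta_2'$. The only cosmetic discrepancy is a factor of $2$ in your intermediate bound on $E\|e^k\|^2$ (the paper's Lemma 3 carries $2C_\alpha^2(\cdot)+2\epsilon^2/(W-2B)^2$ from squaring the sum), but your final recursion and constants match the theorem, so this is bookkeeping rather than a gap.
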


Let us ignore the approximation error in computing geometric median by setting $\epsilon=0$, and compare the two asymptotic learning errors $\Delta_2$ and $\Delta_2'$. Therefore, we deduce that
\begin{align}
\Delta_2 = O\left(\frac{C_\alpha^2}{\mu^2} \delta^2\right) \quad \text{and} \quad \Delta_2' = O\left(\frac{C_\alpha^2}{\mu^2} (\sigma^2 + \delta^2)\right). \nonumber
\end{align}
Observe that $\Delta_2'$, the asymptotic learning error of Byzantine attack resilient SGD, is proportional to the sum of inner and outer variations. With all honest workers having the same data sample, we have $\sigma^2=\delta^2=0$. In this case, the asymptotic learning error $\Delta_2'$ vanishes because the geometric median aggregation takes effect and attains the true gradient. However, when each honest worker has the same set of distinct data samples, the inner variation $\sigma^2$ is no longer zero and the asymptotic learning error $\Delta_2'$ can be large. In contrast, Byrd-SAGA effectively reduces the impact of inner variation, and is able to achieve smaller learning error.

\begin{figure*}
	\centering
	\includegraphics[scale=0.45]{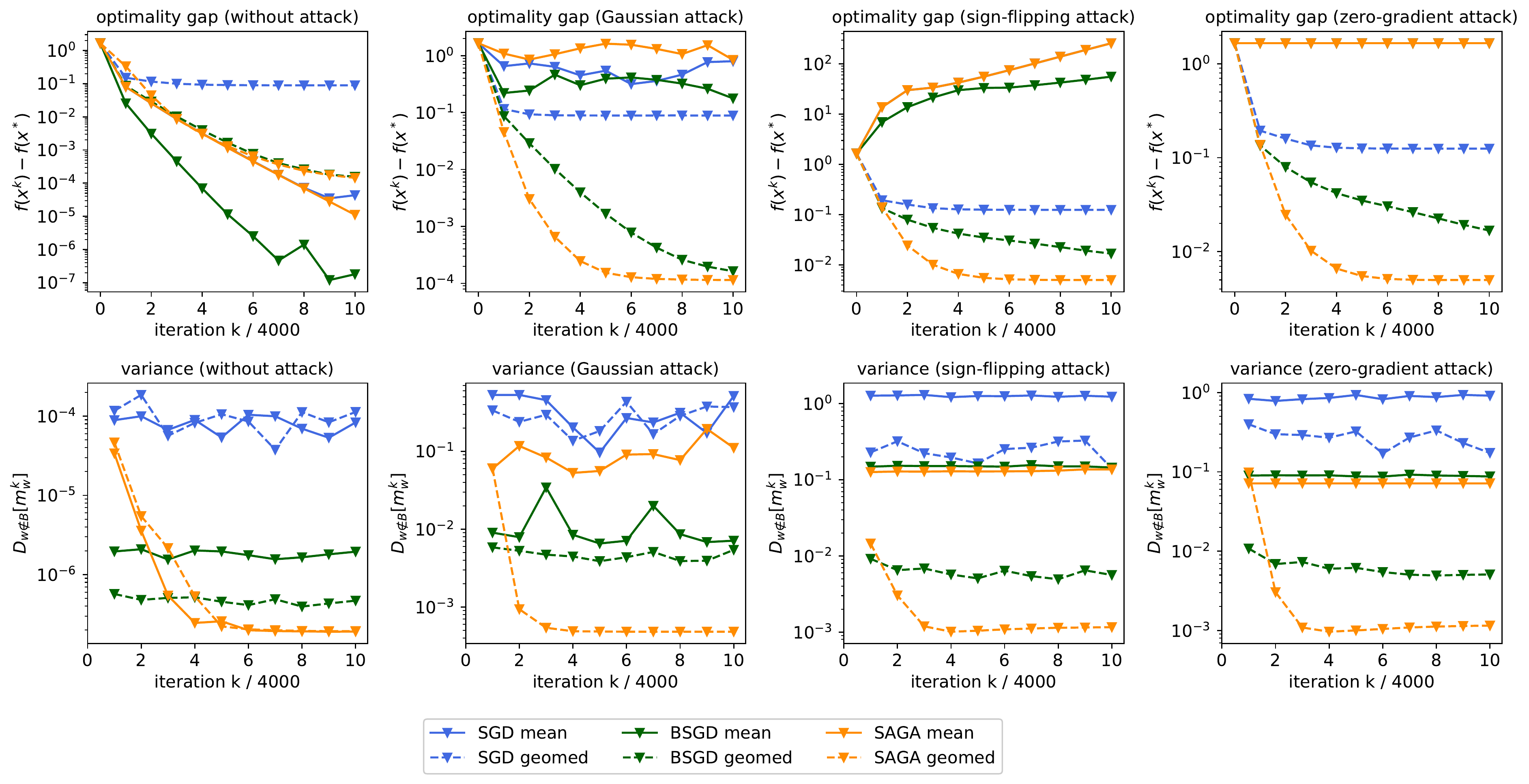}
	\caption{Performance of the distributed SGD, mini-batch (B)SGD and SAGA, with mean and geometric median (geomed) aggregation rules on IJCNN1 dataset. The step sizes are 0.02, 0.01 and 0.02, respectively. SAGA geomed stands for the proposed Byrd-SAGA. From top to bottom: optimality gap and variance of honest messages. From left to right: without attack, Gaussian attack, sign-flipping attack, and zero-gradient attack.}
	\label{fig:attack_SGD_ijcnn}
\end{figure*}
\begin{figure*}
	\centering
	\includegraphics[scale=0.45]{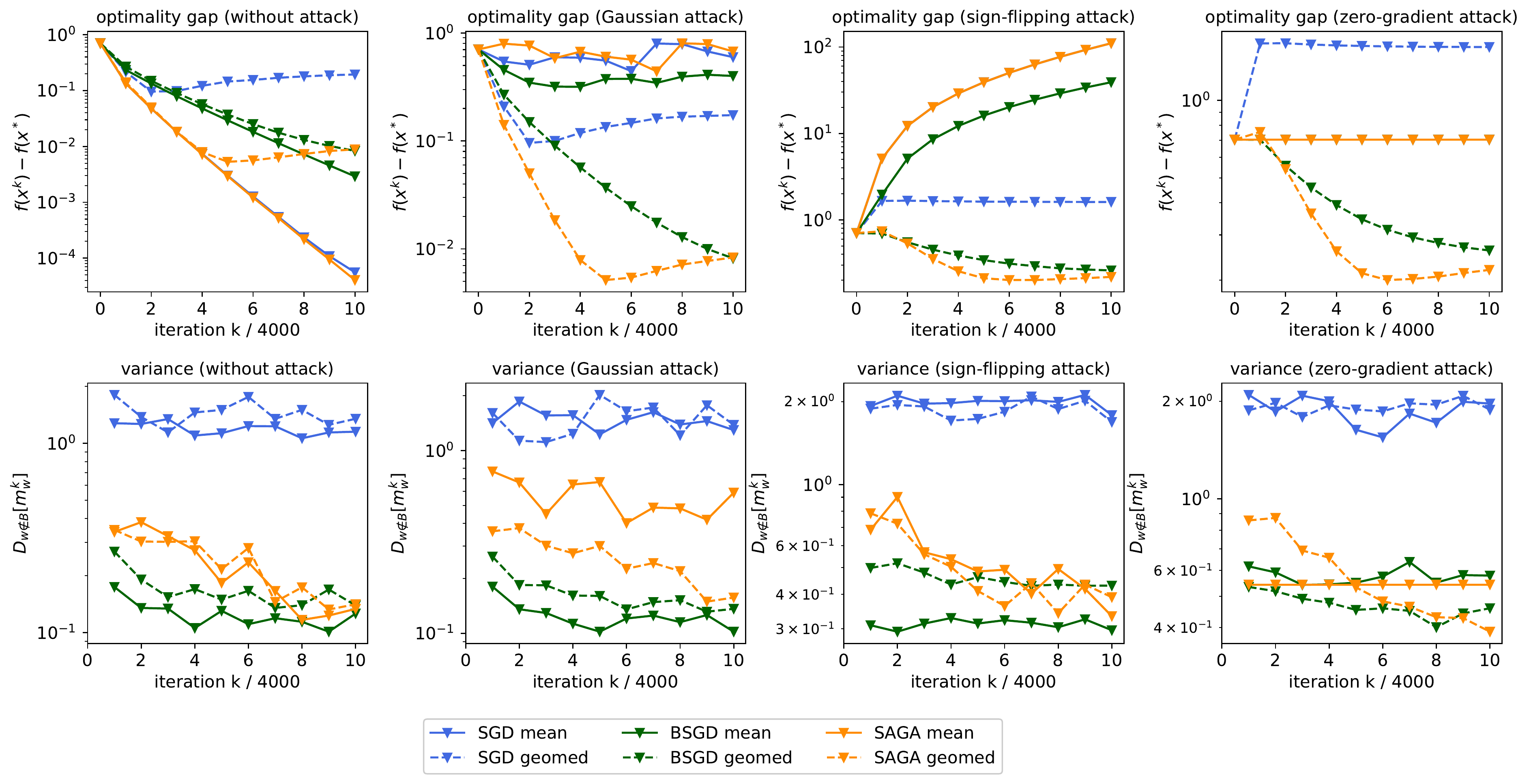}
	\caption{Performance of the distributed SGD, mini-batch (B)SGD and SAGA, with mean and geometric median (geomed) aggregation rules on COVTYPE dataset. The step sizes are 0.01, 0.005 and 0.01, respectively. SAGA geomed stands for the proposed Byrd-SAGA. From top to bottom: optimality gap and variance of honest messages. From left to right: without attack, Gaussian attack, sign-flipping attack, and zero-gradient attack.}
	\label{fig:attack_SGD_covtype}
\end{figure*}

\section{Numerical Experiments}
\label{sec:experiments}

Here we present numerical experiments on convex and nonconvex learning problems\footnote{The codes are available at \url{https://github.com/MrFive5555/Byrd-SAGA}}. For each problem, we evenly distribute the dataset into $W-B = 50$ honest workers unless indicated otherwise. To account for malicious attacks, we additionally launch $B=20$ Byzantine workers. We test the performance of the proposed Byrd-SAGA under three typical Byzantine attacks: Gaussian, sign-flipping and zero-gradient attacks \cite{Li2019RSABS,Lin2019}. For a Gaussian attack, a Byzantine attacker $w \in \mathcal{B}$ draws its $m_w^k$ from a Gaussian distribution with mean $\frac{1}{W-B} \sum_{w' \notin \mathcal{B}} m_{w'}^k$ and variance $30$. For a sign-flipping attack, a Byzantine attacker $w \in \mathcal{B}$ sets its message as $m_w^k = u \cdot \frac{1}{W-B} \sum_{w' \notin \mathcal{B}} m_{w'}^k$, where the magnitude $u=-3$ is used in the numerical experiments. And for a zero-gradient attack, a Byzantine attacker $w \in \mathcal{B}$ sends $m_w^k=- \frac{1}{B} \sum_{w' \notin \mathcal{B}} m_{w'}^k$ so that the messages at the master sum up to zero. We use the algorithm  in \cite{Weiszfeld2009} to obtain the $\epsilon$-approximate geometric median with $\epsilon=1\times 10^{-5}$.

\subsection{$\ell_2$-regularized logistic regression}

Consider the $\ell_2$-regularized logistic regression cost, where each summand $f_{w, j}(x)$ is given by
$$f_{w,i}(x)=\ln \left(1+\exp \left( -b_{w,i}\langle a_{w,i},x\rangle \right) \right)+\frac{\rho}{2}||x||^2$$
with $a_{w,j}\in \mathbb{R}^p$ being the feature vector, $b_{w,j}\in \{-1, 1\}$ the label, and $\rho = 0.01$ a constant. We use the IJCNN1 and COVTYPE datasets\footnote{\url{https://www.csie.ntu.edu.tw/~cjlin/libsvmtools/datasets}}. IJCNN1 contains 49,990 training data samples of $p=22$ dimensions. COVTYPE contains 581,012 training data samples of $p=54$ dimensions.

We first compare SGD, mini-batch (B)SGD with batch size $50$ and SAGA, using mean and geometric median aggregation rules. Compared to SGD, BSGD enjoys smaller stochastic gradient noise, but incurs higher computational cost. In comparison, SAGA also reduces stochastic gradient noise, but its computational cost is in the same order as that of SGD. For each algorithm, we adopt a constant step size, which is tuned to achieve the best optimality gap $f(x^k) - f(x^*)$ in the Byzantine-free scenario. The performance of these algorithms on the IJCNN1 and COVTYPE datasets is depicted in Fig. \ref{fig:attack_SGD_ijcnn} and Fig. \ref{fig:attack_SGD_covtype}, respectively. With Byzantine attacks, all three algorithms using mean aggregation fail. Among the three using geometric median aggregation, Byrd-SAGA markedly outperforms the other two, while BSGD is better than SGD. This demonstrates the importance of variance reduction to handling Byzantine attacks. Regarding the variance of honest messages in particular, Byrd-SAGA, Byzantine attack resilient BSGD and Byzantine attack resilient SGD are in the order of $10^{-3}$, $10^{-2}$ and $10^{-1}$, respectively, for the IJCNN1 dataset. For the COVTYPE dataset, Byrd-SAGA and Byzantine attack resilient BSGD have the same order of variance with respect to honest messages. In this case, Byrd-SAGA achieves similar optimality gap as Byzantine attack resilient BSGD, but converges faster because it is able to use a larger step size.

Theorem \ref{theorem:convergence} establishes that when the outer variation $\delta^2 = 0$, the asymptotic learning error of Byrd-SAGA is zero, no matter how large the inner variation $\sigma^2$ is. In contrast, according to Theorem \ref{theorem:convergenceSGD}, the asymptotic learning error of Byzantine attack resilient SGD is still proportional to the inner variation $\sigma^2$. To validate these theoretical results, we conducted a second set of numerical experiments, where every honest worker has the whole IJCNN1 dataset. Therefore, $\delta^2 = 0$ and $\sigma^2$ remains the same as that in the first set of experiments. We compare SGD, BSGD with batch size $50$ and SAGA, all using the geometric median aggregation rule. The results depicted in Fig. \ref{fig:zeroOuterVariation} corroborate the theoretical findings -- the asymptotic learning error of Byrd-SAGA vanishes, while those of Byzantine attack resilient SGD and BSGD are the same as those shown in Fig. \ref{fig:attack_SGD_ijcnn}.

\begin{figure*}
	\centering
	\includegraphics[scale=0.45]{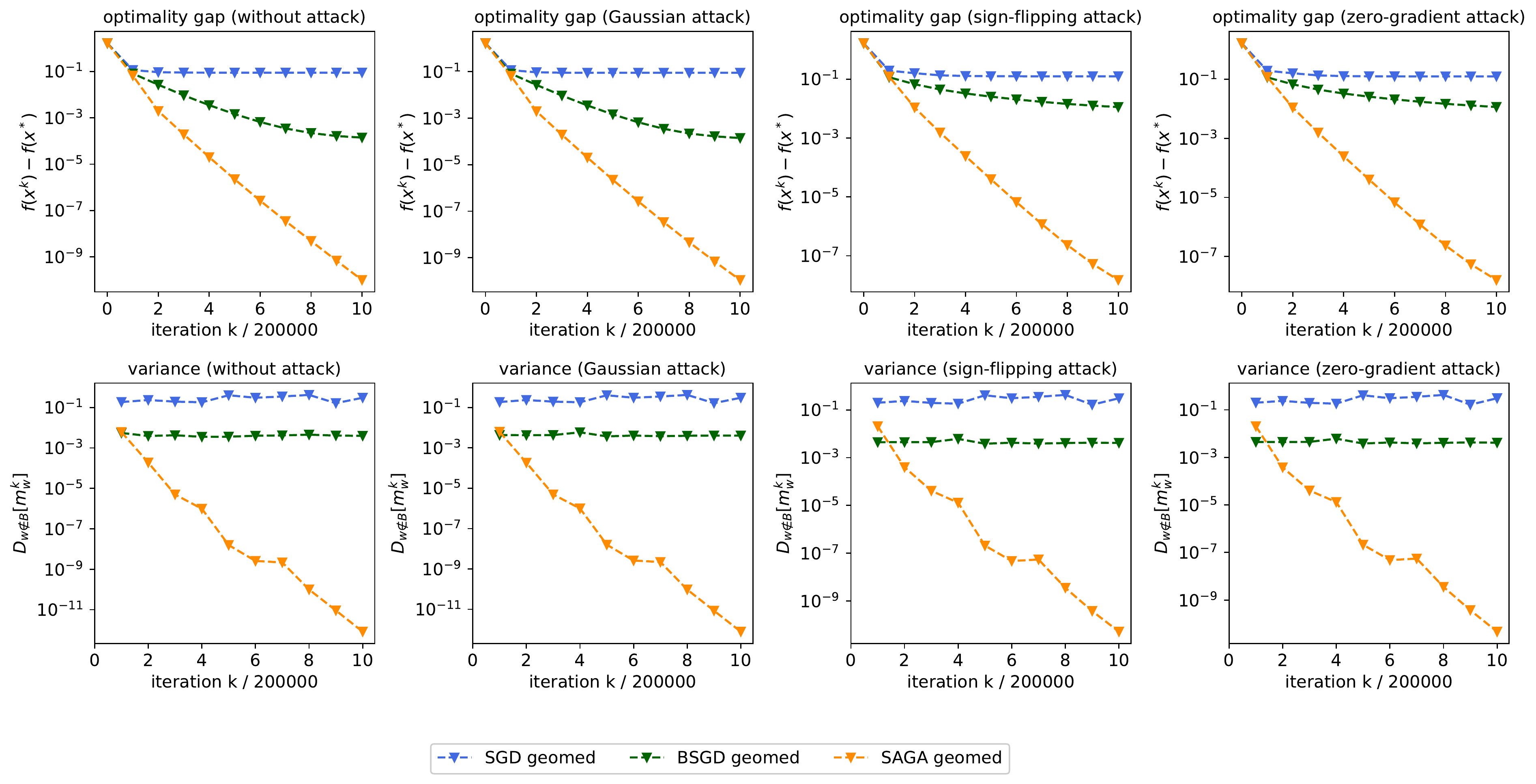}
	\caption{Performance of the distributed SGD, mini-batch (B)SGD and SAGA, with geometric median (geomed) aggregation rule. Every honest worker has the whole IJCNN1 dataset. The step sizes are 0.0004, 0.0002 and 0.0004, respectively. SAGA geomed stands for the proposed Byrd-SAGA. From top to bottom: optimality gap and variance of honest messages. From left to right: without attack, Gaussian attack, sign-flipping attack, and zero-gradient attack.}
	\label{fig:zeroOuterVariation}
\end{figure*}

In the third set of numerical experiments, we compare the use of different aggregation rules in distributed SAGA: mean, geometric median, median, and Krum. As shown in Fig. \ref{fig:attack}, distributed SAGA using mean aggregation is the best in terms of the optimality gap $f(x^k) - f(x^*)$ when there are no Byzantine attacks. However, it fails under all kinds of attacks. With Gaussian attacks, Byrd-SAGA using geometric median achieves the best performance. With sign-flipping and zero-gradient attacks, Byrd-SAGA using Krum is the best, while that using geometric median also performs well. Note that Krum has to know the exact number of Byzantine attackers in advance, while geometric median and median do not need this prior knowledge.

\begin{figure*}
	\centering
	\includegraphics[scale=0.4]{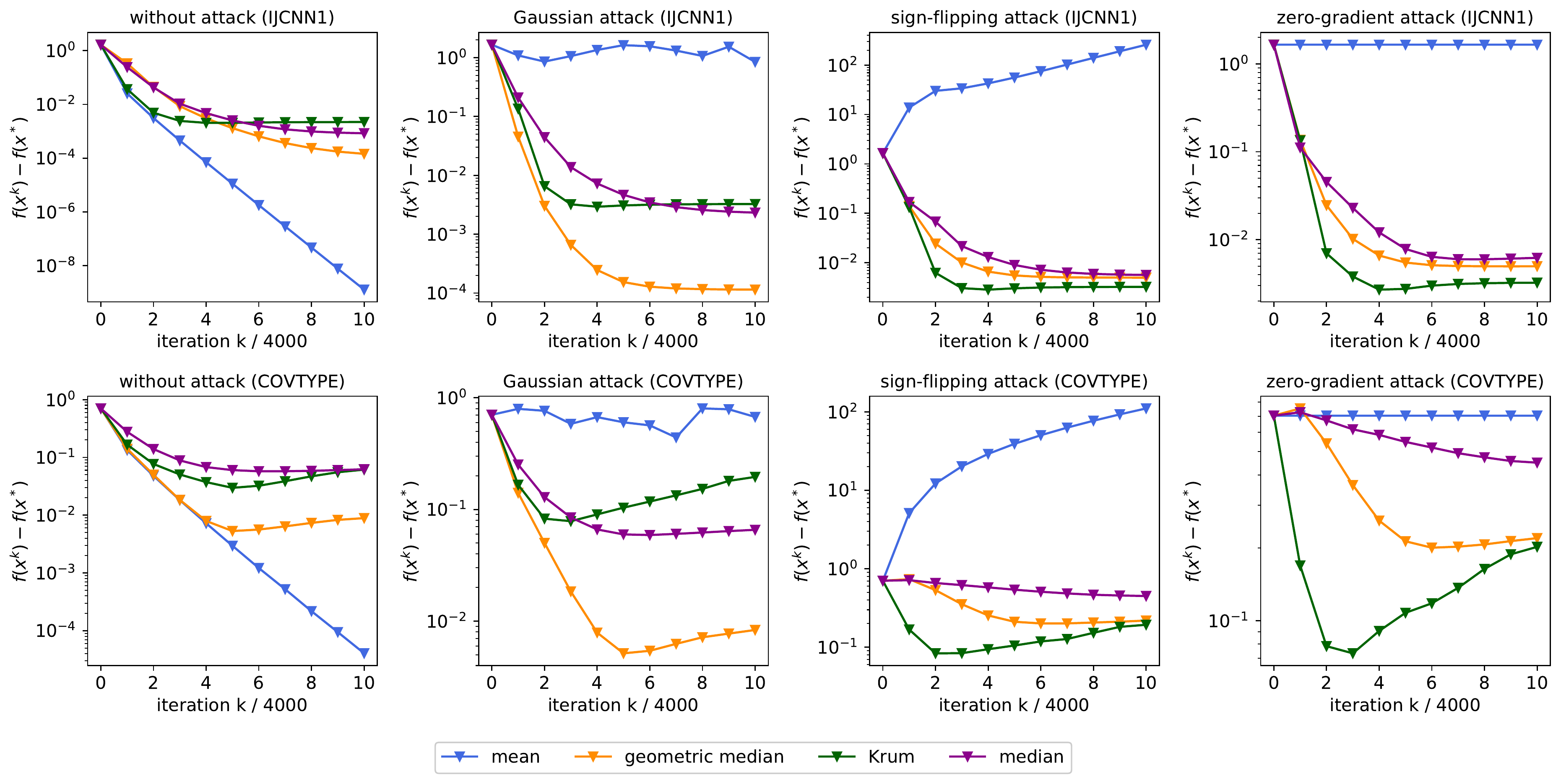}
	\caption{Optimality gaps of distributed SAGA with different aggregation rules: mean, geometric median, median and Krum. The step sizes are 0.02 and 0.01 for the IJCNN1 and COVTYPE datasets, respectively. Curves of geometric median correspond to the proposed Byrd-SAGA. From top to bottom: on IJCNN1 dataset and on COVTYPE dataset. From left to right: without attacks, with Gaussian attacks, with sign-flipping attacks, and with zero-gradient attacks.}
	\label{fig:attack}
\end{figure*}

\subsection{Neural network training}

Here we test training a neural network with one hidden layer of $50$ neurons and ``tanh'' activation function, for multi-class classification on the MNIST dataset\footnote{http://yann.lecun.com/exdb/mnist} comprising $60,000$ data samples, each with dimension $p=784$. We compare SGD with step size $0.1$, BSGD with step size $0.5$ and batch size $50$, and SAGA with step size $0.1$. We run the algorithms for $15,000$ iterations, and report the final accuracy in Table 1. With mean aggregation, all algorithms yield low accuracy in the presence of Byzantine attacks. With the help of geometric median aggregation, BSGD and SAGA are both robust and outperform SGD. Note that Byrd-SAGA exhibits a much lower per-iteration computational cost relative to Byzantine attack resilient BSGD.

\begin{table}
	\centering
	\label{Table:MLP}
	\caption{Accuracy of SGD, mini-batch (B)SGD and SAGA, with mean and geometric median (geomed) aggregation rules. SAGA geomed stands for the proposed Byrd-SAGA.}
	\begin{tabular}{cccc}
		\hline
		attack & algorithm & mean acc (\%) & geomed acc (\%) \\
		\hline
		without & SGD & 97.0 & 92.3\\
		& BSGD & 98.6 & 98.0\\
		& SAGA & 96.5 & 96.3\\
		\hline
		Gaussian & SGD & 36.3 & 92.5\\
		& BSGD & 36.3 & 98.0\\
		& SAGA & 14.5 & 96.4\\
		\hline
		sign-flipping & SGD & 0.11 & 0.03\\
		& BSGD & 0.16 & 90.3\\
		& SAGA & 0.12 & 86.4\\
		\hline
		zero-gradient & SGD & 9.94 & 26.2\\
		& BSGD & 9.89 & 81.5\\
		& SAGA & 9.88 & 92.4\\
		\hline
	\end{tabular}
\end{table}
%\begin{table}
%	\centering
%	\label{Table:MLP}
%	\begin{tabular}{ccc}
%		\hline
%		attack & opt \& agg & final acc (\%) \\
%		\hline
%		without attack & SGD+mean & 97.0\\
%		& BSGD+mean & 98.6\\
%		& SAGA+mean & 96.5\\
%		& SGD+GM & 92.3\\
%		& BSGD+GM & 98.0\\
%		& SAGA+GM & 96.3\\
%		\hline
%		Gaussian attack& SGD+mean & 36.3\\
%		& BSGD+mean & 36.3\\
%		& SAGA+mean & 14.5\\
%		& SGD+GM & 92.5\\
%		& BSGD+GM & 98.0\\
%		& SAGA+GM & 96.4\\
%		\hline
%		sign-flipping attack& SGD+mean & 0.11\\
%		& BSGD+mean & 0.16\\
%		& SAGA+mean & 0.12\\
%		& SGD+GM & 0.03\\
%		& BSGD+GM & 90.3\\
%		& SAGA+GM & 86.4\\
%		\hline
%		zero-gradient attack& SGD+mean & 9.94\\
%		& BSGD+mean & 9.89\\
%		& SAGA+mean & 9.88\\
%		& SGD+GM & 26.2\\
%		& BSGD+GM & 81.5\\
%		& SAGA+GM & 92.4\\
%		\hline
%	\end{tabular}
%\end{table}

\section{Conclusions}
\label{sec:conclusions}

The present paper developed a novel Byzantine attack resilient distributed (Byrd-) SAGA approach to federated finite-sum optimization in the presence of Byzantine attacks. On par with SAGA, Byrd-SAGA corrects stochastic gradients through variance reduction. Per iteration, distributed workers obtain their corrected stochastic gradients before uploading to the master node. Different from SAGA though, the master node in Byrd-SAGA aggregates the received messages using the geometric median rather than the mean. This robust aggregation markedly enhances robustness of Byrd-SAGA in the presence of Byzantine attacks. It was established that Byrd-SAGA converges linearly to a neighborhood of the optimal solution, with the asymptotic learning error determined solely by the number of Byzantine workers.

As confirmed by numerical tests, combinations with other robust aggregation rules also exhibit satisfactory robustness. Our future research agenda includes their analysis, as well as the development and analysis of Byzantine attack resilient algorithms over fully decentralized networks \cite{Bianchi2016,Bajwa2019}.

\clearpage
\onecolumn
\appendices

\section{Proof of Lemma \ref{lemma:geometric_error}}
The proof of Lemma \ref{lemma:geometric_error} relies on the following lemma.
\begin{Lemma}
	Let $\{z : z \in \mathcal{Z}\}$ be a subset of random vectors distributed in a normed vector space. If $\mathcal{Z}'\subseteq \mathcal{Z}$ and $|\mathcal{Z}'|< \frac{|\mathcal{Z}|}{2}$, then it holds that
	\begin{align}
	E\|\underset{z \in \mathcal{Z}}{{\rm geomed}}\{z\}\|^2 \le C_\alpha^2 \frac {\sum_{z\notin \mathcal{Z}'}{E\|z\|^2}} {|\mathcal{Z}|-|\mathcal{Z}'|}
	\label{inequality:geovoting}
	\end{align}
where $C_\alpha :=\frac{2-2\alpha}{1-2\alpha}$ and $\alpha :=\frac{|\mathcal{Z}'|}{|\mathcal{Z}|}$.
	\label{lemma:geometric}
\end{Lemma}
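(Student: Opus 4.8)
The plan is to first establish a \emph{deterministic}, pathwise bound of the form
\begin{align}
\Big\|\underset{z\in\mathcal{Z}}{{\rm geomed}}\{z\}\Big\| \le C_\alpha \cdot \frac{1}{|\mathcal{Z}|-|\mathcal{Z}'|}\sum_{z\notin\mathcal{Z}'}\|z\|,
\label{plan:det}
\end{align}
valid for every realization of the random vectors, and then to obtain \eqref{inequality:geovoting} by squaring, invoking the power-mean (Cauchy--Schwarz) inequality, and taking expectations. Writing $z^\star := {\rm geomed}_{z\in\mathcal{Z}}\{z\}$ and using the defining minimization property \eqref{definition:geo} with the comparison point $y=0$, I would start from
\begin{align}
\sum_{z\in\mathcal{Z}}\|z^\star - z\| \le \sum_{z\in\mathcal{Z}}\|z\|.
\label{plan:opt}
\end{align}

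The crux is to extract $\|z^\star\|$ from \eqref{plan:opt} while controlling the adversarial points in $\mathcal{Z}'$, whose norms may be arbitrarily large. To do so I would split both sides of \eqref{plan:opt} over $\mathcal{Z}\setminus\mathcal{Z}'$ and $\mathcal{Z}'$, and treat the two groups differently. For the honest points $z\notin\mathcal{Z}'$, the triangle inequality gives $\|z^\star - z\|\ge\|z^\star\|-\|z\|$, so that $\sum_{z\notin\mathcal{Z}'}\|z^\star - z\| \ge (|\mathcal{Z}|-|\mathcal{Z}'|)\,\|z^\star\| - \sum_{z\notin\mathcal{Z}'}\|z\|$. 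For the Byzantine points $z\in\mathcal{Z}'$, the \emph{reverse} triangle inequality gives $\|z\|-\|z^\star - z\|\le\|z^\star\|$, hence $\sum_{z\in\mathcal{Z}'}\big(\|z\|-\|z^\star - z\|\big)\le|\mathcal{Z}'|\,\|z^\star\|$. The essential point — and the main obstacle — is that the (unbounded) contributions of $\mathcal{Z}'$ enter \eqref{plan:opt} on both sides and cancel through this reverse-triangle step, leaving only the benign term $|\mathcal{Z}'|\,\|z^\star\|$, which can be absorbed on the left.

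Substituting these two estimates into \eqref{plan:opt} and rearranging, I expect to arrive at $\big(|\mathcal{Z}|-2|\mathcal{Z}'|\big)\|z^\star\| \le 2\sum_{z\notin\mathcal{Z}'}\|z\|$. Since $|\mathcal{Z}'|<|\mathcal{Z}|/2$, the coefficient $|\mathcal{Z}|-2|\mathcal{Z}'|$ is strictly positive, and dividing through and normalizing by $|\mathcal{Z}|-|\mathcal{Z}'|$ reproduces exactly the constant $C_\alpha=\frac{2-2\alpha}{1-2\alpha}$, yielding \eqref{plan:det}. Finally, I would square \eqref{plan:det}, apply $\big(\frac{1}{m}\sum_i a_i\big)^2\le\frac{1}{m}\sum_i a_i^2$ with $m=|\mathcal{Z}|-|\mathcal{Z}'|$ and $a_i=\|z\|$ to pass from the squared sum of norms to the sum of squared norms, and take expectations to conclude \eqref{inequality:geovoting}. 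The $\epsilon$-approximate geometric median would only add a constant $\epsilon$ to the right-hand side of \eqref{plan:opt}, propagating harmlessly through the same argument.
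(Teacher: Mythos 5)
Your proposal is correct and follows essentially the same route as the paper's own proof: the optimality of the geometric median against the comparison point $y=0$, the two one-sided triangle inequalities applied separately to $\mathcal{Z}\setminus\mathcal{Z}'$ and $\mathcal{Z}'$ so the adversarial norms cancel, the rearrangement to $(|\mathcal{Z}|-2|\mathcal{Z}'|)\|z^\star\|\le 2\sum_{z\notin\mathcal{Z}'}\|z\|$, and finally squaring with the power-mean inequality before taking expectations. The $\epsilon$-approximate remark at the end likewise matches the paper's Lemma 3.
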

\begin{proof}
With $z^*={{\rm geomed}}_{z \in \mathcal{Z}}\{z\}$ and $z \in \mathcal{Z}'$, it holds that  $\|z^*-z\|\ge\|z\|-\|z^*\|$; and for all $z \notin \mathcal{Z}'$, we have $\|z^*-z\|\ge\|z^*\|-\|z\|$. Then, summing up $\|z^*-z\|$ over all $z \in \mathcal{Z}$ yields
	\begin{align}
	\label{inequality:geometric-median-1}
	\sum_{z \in \mathcal{Z}} \|z^*-z\| \ge \sum_{z \in \mathcal{Z}} \|z\|+(|\mathcal{Z}|-2|\mathcal{Z}'|)\|z^*\|-2\sum_{z \notin \mathcal{Z}'}\|z\|.
	\end{align}
	According to the definition of geometric median, it holds that
	\begin{align}
	\sum_{z \in \mathcal{Z}} \|z^*-z\| = \inf_y \sum_{z \in \mathcal{Z}} \|y-z\| \leq \sum_{z \in \mathcal{Z}} \|z\|.
	\end{align}
	Combining the two inequalities, we arrive at
	\begin{align}
	\|z^*\|
	\le \frac{2 \sum_{z\notin \mathcal{Z}'} {\|z\|}}{|\mathcal{Z}|-2|\mathcal{Z}'|}
	= \frac {2|\mathcal{Z}|-2|\mathcal{Z}'|} {|\mathcal{Z}|-2|\mathcal{Z}'|} \frac {\sum_{z\notin \mathcal{Z}'} {\|z\|}} {|\mathcal{Z}|-|\mathcal{Z}'|}
	= C_\alpha \frac {\sum_{z\notin \mathcal{Z}'}{\|z\|}} {|\mathcal{Z}|-|\mathcal{Z}'|}
	\end{align}
and upon squaring both sides of the latter, we find
	\begin{align} \label{inequality:geovoting-statistic}
	\|z^*\|^2
	\le C_\alpha^2 \frac {( \sum_{z\notin \mathcal{Z}'}{\|z\|} )^2} { ( |\mathcal{Z}|-|\mathcal{Z}'| )^2}
	\le C_\alpha^2 \frac {\sum_{z\notin \mathcal{Z}'}{\|z\|^2}} {|\mathcal{Z}|-|\mathcal{Z}'|}.
	\end{align}
	Then taking expectations on both sides, yields \eqref{inequality:geovoting}, and completes the proof.
\end{proof}

With Lemma \ref{lemma:geometric}, the proof of Lemma \ref{lemma:geometric_error} is straightforward.

\begin{proof}
It follows readily from Lemma \ref{lemma:geometric} that
	\begin{align} \label{eq:temp001}
	E\|\underset{z\in \mathcal{Z}}{{\rm geomed}}\{z\}-\bar{z}\|^2
	=E\|\underset{z\in \mathcal{Z}}{{\rm geomed}}\{z-\bar{z}\}\|^2
	\le C_\alpha^2 \frac {\sum_{z\notin \mathcal{Z}'}{E\|z-\bar{z}\|^2}} {|\mathcal{Z}|-|\mathcal{Z}'|}.
	\end{align}
	Applying the inequality of
	\begin{align}
	E\|z-\bar{z}\|^2
	=\|z-Ez\|^2+ 2E\langle z-Ez, Ez-\bar z\rangle+\|Ez-\bar{z}\|^2
	=\|z-Ez\|^2+\|Ez-\bar{z}\|^2
	\end{align}
	to \eqref{eq:temp001}, yields
	\begin{align}
	E\|\underset{z\in \mathcal{Z}}{{\rm geomed}}\{z\}-\bar{z}\|^2
	\le C_\alpha^2 \frac {\sum_{z\notin \mathcal{Z}'}{E\|z-Ez\|^2}} {|\mathcal{Z}|-|\mathcal{Z}'|}
	+ C_\alpha^2 \frac {\sum_{z\notin \mathcal{Z}'}{E\|Ez-\bar{z}\|^2}} {|\mathcal{Z}|-|\mathcal{Z}'|}
	\end{align}
	which completes the proof.
\end{proof}

\section{Lemma \ref{lemma:geometric-epsilon} and Its Proof}

Since computing the accurate geometric median is difficult, we consider the $\epsilon$-approximate geometric median in this paper. The following lemma is the $\epsilon$-approximate counterpart of Lemma \ref{lemma:geometric}.

\begin{Lemma}
	\label{lemma:geometric-epsilon}
	Let $\{z : z \in \mathcal{Z}\}$ be a subset of random vectors distributed in a normed vector space. If $\mathcal{Z}'\subseteq \mathcal{Z}$ and $|\mathcal{Z}'|< \frac{|\mathcal{Z}|}{2}$, it holds that
	\begin{align}
	E \|z^*_\epsilon\|^2
	\le 2C_\alpha^2 \frac {\sum_{z\notin \mathcal{Z}'}{E\|z\|^2}} {|\mathcal{Z}|-|\mathcal{Z}'|}+\frac{2\epsilon^2}{(|\mathcal{Z}|-2|\mathcal{Z}'|)^2}
	\label{inequality:geovoting-epsilon-error-exp}
	\end{align}
    where $C_\alpha :=\frac{2-2\alpha}{1-2\alpha}$, $\alpha :=\frac{|\mathcal{Z}'|}{|\mathcal{Z}|}$, and $z^*_\epsilon$ is an $\epsilon$-approximate geometric median of $\mathcal{Z}$.
\end{Lemma}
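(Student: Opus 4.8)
The plan is to mirror the proof of Lemma \ref{lemma:geometric} almost verbatim, since the reverse-triangle-inequality machinery that produced the lower bound on $\sum_{z \in \mathcal{Z}}\|z^*-z\|$ never used the exact optimality of $z^*$; it holds for \emph{any} candidate point, in particular for the $\epsilon$-approximate minimizer $z^*_\epsilon$. The only place where optimality entered was the step $\sum_{z\in\mathcal{Z}}\|z^*-z\| = \inf_y \sum_{z\in\mathcal{Z}}\|y-z\| \le \sum_{z\in\mathcal{Z}}\|z\|$, and this is exactly where the definition of the $\epsilon$-approximate geometric median forces an additive slack of $\epsilon$.

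First I would reproduce the lower bound: writing $\|z^*_\epsilon-z\|\ge\|z\|-\|z^*_\epsilon\|$ for $z\in\mathcal{Z}'$ and $\|z^*_\epsilon-z\|\ge\|z^*_\epsilon\|-\|z\|$ for $z\notin\mathcal{Z}'$, then summing over $\mathcal{Z}$ gives
\begin{align}
\sum_{z \in \mathcal{Z}} \|z^*_\epsilon-z\| \ge \sum_{z \in \mathcal{Z}} \|z\|+(|\mathcal{Z}|-2|\mathcal{Z}'|)\|z^*_\epsilon\|-2\sum_{z \notin \mathcal{Z}'}\|z\|.
\end{align}
For the matching upper bound I would invoke the definition of $z^*_\epsilon$ and then bound the infimum by the choice $y=0$, obtaining
\begin{align}
\sum_{z \in \mathcal{Z}} \|z^*_\epsilon-z\| \le \inf_y \sum_{z \in \mathcal{Z}} \|y-z\|+\epsilon \le \sum_{z \in \mathcal{Z}} \|z\|+\epsilon.
\end{align}
Chaining the two displays cancels the common $\sum_{z\in\mathcal{Z}}\|z\|$ term and yields the linear estimate
\begin{align}
\|z^*_\epsilon\| \le \frac{2\sum_{z\notin \mathcal{Z}'}\|z\|}{|\mathcal{Z}|-2|\mathcal{Z}'|}+\frac{\epsilon}{|\mathcal{Z}|-2|\mathcal{Z}'|},
\end{align}
which is precisely the bound of Lemma \ref{lemma:geometric} with an extra $\epsilon/(|\mathcal{Z}|-2|\mathcal{Z}'|)$ correction.

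To finish, I would square this inequality. The one mildly delicate point — and the closest thing to an obstacle here — is that squaring a sum of two nonnegative terms does not distribute, so I would apply $(a+b)^2\le 2a^2+2b^2$ to split the deterministic part from the $\epsilon$ part cleanly. This produces $2C_\alpha^2(\sum_{z\notin\mathcal{Z}'}\|z\|)^2/(|\mathcal{Z}|-|\mathcal{Z}'|)^2$ plus $2\epsilon^2/(|\mathcal{Z}|-2|\mathcal{Z}'|)^2$, using the identity $2/(|\mathcal{Z}|-2|\mathcal{Z}'|)=C_\alpha/(|\mathcal{Z}|-|\mathcal{Z}'|)$ that already underlies Lemma \ref{lemma:geometric}. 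Then the Cauchy--Schwarz bound $(\sum_{z\notin\mathcal{Z}'}\|z\|)^2\le (|\mathcal{Z}|-|\mathcal{Z}'|)\sum_{z\notin\mathcal{Z}'}\|z\|^2$ converts the squared sum into a sum of squares, and taking expectations on both sides delivers exactly \eqref{inequality:geovoting-epsilon-error-exp}. Every step is routine; the factor-of-$2$ inflation of the main term (relative to Lemma \ref{lemma:geometric}) is the unavoidable price of the split that isolates the additive approximation error.
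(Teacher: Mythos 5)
Your proposal is correct and follows essentially the same route as the paper: both reuse the reverse-triangle-inequality lower bound from the exact-median lemma (which, as you note, never used optimality of the minimizer), replace the optimality step with the $\epsilon$-approximate definition to pick up the additive $\epsilon/(|\mathcal{Z}|-2|\mathcal{Z}'|)$ term, and then square via $(a+b)^2\le 2a^2+2b^2$ followed by Cauchy--Schwarz and expectation. No gaps.
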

\begin{proof}
	Because $z^*_\epsilon$ is an $\epsilon$-approximate geometric median, it follows that
	\begin{align}
	\sum_{z \in \mathcal{Z}} \|z^*_\epsilon-z\| \le \inf_y \sum_{z \in \mathcal{Z}} \|y-z\| +\epsilon \leq \sum_{z \in \mathcal{Z}} \|z\|+\epsilon.
	\end{align}
	Notice that \eqref{inequality:geometric-median-1} remains valid here. Hence, we have
	\begin{align}
	\|z^*_\epsilon\|
	\le C_\alpha \frac {\sum_{z\notin \mathcal{Z}'}{\|z\|}} {|\mathcal{Z}|-|\mathcal{Z}'|}+\frac\epsilon {|\mathcal{Z}|-2|\mathcal{Z}'|}.
	\label{inequality:gm-inequ-1}
	\end{align}
Squaring both sides of \eqref{inequality:gm-inequ-1}, leads to
	\begin{align}
	\|z^*_\epsilon\|^2
	&\le \left(C_\alpha \frac {\sum_{z\notin \mathcal{Z}'}{\|z\|}} {|\mathcal{Z}|-|\mathcal{Z}'|}+\frac\epsilon {|\mathcal{Z}|-2|\mathcal{Z}'|}\right)^2  \\
	&\le 2C_\alpha^2 \left( \frac {\sum_{z\notin \mathcal{Z}'}{\|z\|}} {|\mathcal{Z}|-|\mathcal{Z}'|}\right)^2+\frac{2\epsilon^2} {(|\mathcal{Z}|-2|\mathcal{Z}'|)^2} \\
	&\le 2C_\alpha^2 \frac {\sum_{z\notin \mathcal{Z}'}{\|z\|^2}} {|\mathcal{Z}|-|\mathcal{Z}'|}+\frac{2\epsilon^2} {(|\mathcal{Z}|-2|\mathcal{Z}'|)^2}.
	\end{align}
	Then taking expectations on both sides, yields \eqref{inequality:geovoting-epsilon-error-exp}, and completes the proof.
\end{proof}

\section{Lemma \ref{lemma:medfxexperror} and its proof}

As we have indicated in Section \ref{subsec:convergence}, the main challenge in the proof of Byrd-SAGA is that the geometric median of $\{m_i^k\}$ is a biased estimate of the gradient $f'(x^k)$. To handle the bias, the following lemma characterizes the error between an $\epsilon$-approximate geometric median of $\{m_w^k\}$ and $f'(x^k)$ per slot $k$.

\begin{Lemma}   \label{lemma:medfxexperror}
	Consider Byrd-SAGA with $\epsilon$-approximate geometric median aggregation. Under Assumptions \ref{assumption:muL} and \ref{assumption:outterVariance}, if the number of Byzantine attackers satisfies $B < \frac{W}{2}$, then an $\epsilon$-approximate geometric median of $\{m_w^k\}$, denoted by $z^*_\epsilon$, satisfies
	\begin{align}
	\label{equation:medfxexperror}
	E\|z^*_\epsilon-f'(x^k)\|^2\le
	2C_\alpha^2L^2S^k + 2C_\alpha^2\delta^2
	+\frac{2\epsilon^2}{(W-2B)^2}
	\end{align}
	where
	\begin{align}
	C_\alpha := \frac{2-2\alpha}{1-2\alpha} \quad \text{and} \quad \alpha := \frac{B}{W}
	\label{definition:C}
	\end{align}
    while $S^k$ is defined as
	\begin{align}
	S^k & := \avgHonest \avg{j}{J}\|x^k - \phi_{w,j}^k\|^2.
	\label{definition:Sk}
	\end{align}
\end{Lemma}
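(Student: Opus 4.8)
The plan is to reduce the claim to the $\epsilon$-approximate concentration bound of Lemma~\ref{lemma:geometric-epsilon} by re-centering every message at the true gradient $f'(x^k)$, and then to control the resulting per-worker second moments using the SAGA variance structure. Throughout I would condition on the history up to slot $k$, so that the stored iterates $\phi_{w,j}^k$, and hence $S^k$, are fixed and the only randomness is the fresh sampling $i_w^k$. First I would exploit the translation equivariance of the geometric median: since the defining distances are translation invariant, $z^*_\epsilon - f'(x^k)$ is an $\epsilon$-approximate geometric median of the shifted set $\{m_w^k - f'(x^k)\}$. Applying Lemma~\ref{lemma:geometric-epsilon} to this shifted set, with $\mathcal{Z}'$ taken to be the Byzantine set $\mathcal{B}$ (so that $|\mathcal{Z}|-|\mathcal{Z}'| = W-B$ and $|\mathcal{Z}|-2|\mathcal{Z}'| = W-2B$), would yield
\begin{align}
E\|z^*_\epsilon - f'(x^k)\|^2 \le 2C_\alpha^2 \avgHonest E\|m_w^k - f'(x^k)\|^2 + \frac{2\epsilon^2}{(W-2B)^2}. \nonumber
\end{align}
It then remains to bound the averaged honest-worker term by $L^2 S^k + \delta^2$.

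For each honest worker $m_w^k = g_w^k$ and, as recorded in \eqref{equation:unbiasUpdadte-00}, $E g_w^k = f_w'(x^k)$. Hence I would split the error into a variance part and a bias part exactly as in the proof of Lemma~\ref{lemma:geometric_error}: because the cross term $E\langle g_w^k - f_w'(x^k), f_w'(x^k)-f'(x^k)\rangle$ vanishes,
\begin{align}
E\|g_w^k - f'(x^k)\|^2 = E\|g_w^k - f_w'(x^k)\|^2 + \|f_w'(x^k) - f'(x^k)\|^2. \nonumber
\end{align}
Averaging the second (outer) term over the honest workers and invoking Assumption~\ref{assumption:outterVariance} bounds it by $\delta^2$.

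The technically interesting step is bounding the first (inner) term, and this is where the SAGA correction does its work. Writing $i = i_w^k$, the constant term $\avg{j}{J} f_{w,j}'(\phi_{w,j}^k)$ equals $E_i f_{w,i}'(\phi_{w,i}^k)$, so $g_w^k - f_w'(x^k)$ is precisely the centered vector $\big(f_{w,i}'(x^k)-f_{w,i}'(\phi_{w,i}^k)\big) - E_i\big(f_{w,i}'(x^k)-f_{w,i}'(\phi_{w,i}^k)\big)$. Using $E\|X-EX\|^2 \le E\|X\|^2$ followed by the $L$-Lipschitz continuity of the component gradients gives
\begin{align}
E_i\|g_w^k - f_w'(x^k)\|^2 \le E_i\|f_{w,i}'(x^k)-f_{w,i}'(\phi_{w,i}^k)\|^2 \le L^2 \avg{j}{J}\|x^k-\phi_{w,j}^k\|^2. \nonumber
\end{align}
Averaging over the honest workers produces exactly $L^2 S^k$ by the definition \eqref{definition:Sk} of $S^k$. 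Substituting the inner and outer bounds into the displayed concentration inequality then delivers \eqref{equation:medfxexperror}.

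The main obstacle I anticipate is conceptual rather than computational: the geometric median is a \emph{biased} estimator of $f'(x^k)$, so one cannot simply appeal to unbiasedness of the mean as in vanilla SAGA. The re-centering trick is what converts this bias question into a second-moment question that Lemma~\ref{lemma:geometric-epsilon} can absorb. The key observation making the inner bound clean --- and the reason Assumption~\ref{assumption:innerVariance} is not needed here --- is that SAGA's correction cancels the stale average, so the residual variance is governed entirely by the gradient staleness $\|x^k-\phi_{w,j}^k\|^2$ through Lipschitz continuity, rather than by the raw stochastic-gradient variance $\sigma^2$.
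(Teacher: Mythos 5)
Your proof is correct and follows essentially the same route as the paper's: apply the $\epsilon$-approximate concentration lemma to the messages re-centered at $f'(x^k)$, split each honest worker's error into the inner term (bounded by $L^2 S^k$ via the variance decomposition $E\|X-E_iX\|^2\le E\|X\|^2$ and Lipschitz continuity) and the outer term (bounded by $\delta^2$ via Assumption \ref{assumption:outterVariance}). The only cosmetic difference is that you make the translation-equivariance step explicit, which the paper leaves implicit when invoking Lemma \ref{lemma:geometric-epsilon}.
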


\begin{proof}
	We begin with upper bounding the mean-square error $E\|m_w^k - f_w'(x^k)\|^2$, where $w \notin \mathcal{B}$. Using the definition of $m_w^k$ in \eqref{definition:mk}, we have for any $w \notin \mathcal{B}$ that
	\begin{align}
	& E\|m_w^k-f'_w(x^k)\|^2 \label{equation:lemma2-1} \\
	= & E\|f_{w, i_w^k}'(x^k)-f_{w, i_w^k}'(\phi_{w,i_w^k}^k)+\avg{j}{J} f_{w, j}'(\phi_{w,j}^k)-f'_w(x^k)\|^2 \nonumber \\
	= & E\|f_{w, i_w^k}'(x^k) - f_{w, i_w^k}'(\phi_{w,i_w^k}^k)\|^2 - \|f_w'(x^k) -\avg{j}{J}f_{w, j}'(\phi_{w,j}^k)\|^2 \nonumber \\
	\le & E\|f_{w, i_w^k}'(x^k) - f_{w, i_w^k}'(\phi_{w,i_w^k}^k)\|^2  \nonumber\\
	\le & L^2 E\|x^k - \phi_{w,i_w^k}^k\|^2 \nonumber\\
	= & L^2 \avg{j}{J}\|x^k - \phi_{w,j}^k\|^2 \nonumber
	\end{align}
where the second equality is due to variance decomposition $E\|a-Ea\|^2=E\|a\|^2-\|Ea\|^2$ with $a = f_{w, i_w^k}'(x^k)-f_{w, i_w^k}'(\phi_{w,i_w^k}^k)$, and $Ea = f_w'(x^k) -\avg{j}{J}f_{w, j}'(\phi_{w,j}^k)$; while the last inequality comes from Assumption \ref{assumption:muL}.
	
To further upper bound the mean-square error $E\|m_w^k - f'(x^k)\|^2$, we have that
	\begin{align}
	& E\|m_w^k - f'(x^k)\|^2 \label{equation:lemma2-2} \\
	= & E\|m_w^k - f_w'(x^k) + f_w'(x^k) - f'(x^k)\|^2
	\nonumber \\
	% -------------------
	= & E\|m_w^k - f_w'(x^k)\|^2
	+ 2E\left\langle m_w^k - f_w'(x^k),f_w'(x^k) - f'(x^k)\right\rangle
	+\|f_w'(x^k) - f'(x^k)\|^2
	\nonumber \\
	% -------------------
	= & E\|m_w^k - f_w'(x^k)\|^2 + \|f_w'(x^k) - f'(x^k)\|^2
	\nonumber \\
	% -------------------
    \le & L^2 \avg{j}{J}\|x^k - \phi_{w,j}^k\|^2 + \delta^2. \nonumber
	\end{align}
where the last inequality relies on \eqref{equation:lemma2-1} and Assumption \ref{assumption:outterVariance}.

Next, we will derive an upper bound on $E\|z^*_\epsilon-f'(x^k)\|^2$. According to \eqref{inequality:geovoting-epsilon-error-exp} in Lemma \ref{lemma:geometric-epsilon} and \eqref{equation:lemma2-2}, it holds that
	\begin{align}
	& E\|z^*_\epsilon-f'(x^k)\|^2 \nonumber \\
	\leq & 2 C_\alpha^2\frac {1} {W-B} \sum_{w\notin \mathcal{B}}E\|m_w^k-f'(x^k)\|^2+\frac{2\epsilon^2}{(W-2B)^2}
	\nonumber \\
	\leq & 2C_\alpha^2 \avgHonest
	\left( L^2 \avg{j}{J}\|x^k - \phi_{w,j}^k\|^2 + \delta^2
	\right)+\frac{2\epsilon^2}{(W-2B)^2}
	\label{equation:lemma2-4}
	\end{align}
which completes the proof.
\end{proof}

\section{Lemma \ref{lemma:recSk} and its proof}

In Lemma \ref{lemma:medfxexperror}, the upper bound of $E\|z^*_\epsilon-f'(x^k)\|^2$ contains a time-varying term $S^k$. The following lemma characterizes the evolution of $S^k$.

\begin{Lemma}   \label{lemma:recSk}
    Consider Byrd-SAGA with $\epsilon$-approximate geometric median aggregation. Under Assumptions \ref{assumption:muL}, it holds that
	\begin{align}
		\label{equation:recSk}
		ES^{k+1} \le
		4J\cdot E\|x^{k+1} - x^{k} + \gamma f'(x^k)\|^2
		+ 4J\gamma^2L^2 \|x^k-x^*\|^2
		+ (1-\frac 1 {J^2})S^k
	\end{align}
	where $S^k$ is defined in \eqref{definition:Sk}.
\end{Lemma}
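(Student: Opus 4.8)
The plan is to propagate the recursive structure of the stored points $\phi_{w,j}^k$ through the update rule \eqref{equation:phi}, conditioning throughout on all randomness generated before slot $k$ so that $x^k$, $\phi_{w,j}^k$, $S^k$ and $\|x^k-x^*\|^2$ may be treated as deterministic. For a fixed honest worker $w \notin \mathcal{B}$ and index $j$, I would first split
\begin{align}
x^{k+1}-\phi_{w,j}^{k+1} = (x^{k+1}-x^k) + (x^k - \phi_{w,j}^{k+1}) \nonumber
\end{align}
and apply Young's inequality $\|a+b\|^2 \le (1+\tfrac1\eta)\|a\|^2 + (1+\eta)\|b\|^2$ with the specific choice $\eta = 1/J$, giving
\begin{align}
\|x^{k+1}-\phi_{w,j}^{k+1}\|^2 \le (1+J)\|x^{k+1}-x^k\|^2 + (1+\tfrac1J)\|x^k-\phi_{w,j}^{k+1}\|^2. \nonumber
\end{align}
The point of this split is that the second term no longer involves the heavily coupled iterate $x^{k+1}$, so its conditional expectation can be computed exactly.

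Next I would evaluate that conditional expectation. Since worker $w$ draws $i_w^k$ uniformly from $\{1,\dots,J\}$, the rule \eqref{equation:phi} sets $\phi_{w,j}^{k+1}=x^k$ with probability $1/J$ (making $\|x^k-\phi_{w,j}^{k+1}\|^2=0$) and leaves $\phi_{w,j}^{k+1}=\phi_{w,j}^k$ otherwise, so that $E\|x^k-\phi_{w,j}^{k+1}\|^2 = (1-\tfrac1J)\|x^k-\phi_{w,j}^k\|^2$. Averaging over $w \notin \mathcal{B}$ and $j$ then turns the second term into $(1+\tfrac1J)(1-\tfrac1J)S^k = (1-\tfrac1{J^2})S^k$, which is precisely the contraction factor in the claim; this is exactly why the tuning $\eta = 1/J$ is chosen. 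The first term averages to $(1+J)\,E\|x^{k+1}-x^k\|^2$.

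It remains to connect $\|x^{k+1}-x^k\|^2$ to the two quantities on the right-hand side. I would add and subtract $\gamma f'(x^k)$ and use Young's inequality once more,
\begin{align}
\|x^{k+1}-x^k\|^2 \le 2\|x^{k+1}-x^k+\gamma f'(x^k)\|^2 + 2\gamma^2\|f'(x^k)\|^2, \nonumber
\end{align}
and then bound the residual gradient by $\|f'(x^k)\|^2 = \|f'(x^k)-f'(x^*)\|^2 \le L^2\|x^k-x^*\|^2$, invoking $L$-Lipschitz continuity from Assumption \ref{assumption:muL} together with the optimality condition $f'(x^*)=0$. Substituting back yields a coefficient $2(1+J)$ on both the $E\|x^{k+1}-x^k+\gamma f'(x^k)\|^2$ term and the $\gamma^2L^2\|x^k-x^*\|^2$ term; bounding $2(1+J)\le 4J$ for $J\ge1$ produces the stated constants.

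The main obstacle to watch is the statistical coupling between $x^{k+1}$ and the sampling indices $\{i_w^k\}$: because $x^{k+1}$ depends on every worker's draw through the geometric median aggregation, one cannot take the expectation of $S^{k+1}$ term by term in closed form. The Young-inequality split is what circumvents this — it isolates the innocuous, worker-local factor $\|x^k-\phi_{w,j}^{k+1}\|^2$ (whose conditional expectation depends only on worker $w$'s own draw) from the globally coupled descent increment $\|x^{k+1}-x^k\|^2$, which I am content to bound crudely rather than evaluate exactly.
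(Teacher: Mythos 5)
Your proposal is correct and follows essentially the same route as the paper's proof: the same Young-inequality split of $x^{k+1}-\phi_{w,j}^{k+1}$ with parameter $1/J$, the same exact evaluation $E\|x^k-\phi_{w,j}^{k+1}\|^2=(1-\tfrac1J)\|x^k-\phi_{w,j}^k\|^2$ yielding the $(1-\tfrac1{J^2})$ contraction, and the same add-and-subtract of $\gamma f'(x^k)$ with the Lipschitz bound and $f'(x^*)=0$ to produce the $4J$ constants.
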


\begin{proof}
For the expectation of $ES^{k+1}$, we have that
	\begin{align}
	\label{inequality:recSk1}
	ES^{k+1}=& E\left(
		\avgHonest \avg{j}{J}
		\|x^{k+1} - \phi_{w,j}^{k+1}\|^2
	\right) \nonumber\\
	\le&
	E\left(
		\avgHonest \avg{j}{J}
		\left(
			(1+\beta^{-1})\|x^{k+1} - x^{k}\|^2 + (1+\beta)\|x^{k}- \phi_{w,j}^{k+1}\|^2
		\right)
	\right) \nonumber\\
	% ---------------
	=&
	(1+\beta^{-1})\cdot E\|x^{k+1} - x^{k}\|^2 +
	(1+\beta)\left(
	\avgHonest \avg{j}{J}
	 E\|x^{k}- \phi_{w,j}^{k+1}\|^2
	\right) \nonumber\\
	% ---------------
	=& (1+\beta^{-1})\cdot E\|x^{k+1} - x^{k}\|^2
	+ (1+\beta)(1-\frac 1 {J})\avgHonest \avg{j}{J}\|x^{k} - \phi_{w,j}^{k}\|^2 \nonumber\\
	% ---------------
	=& (1+\beta^{-1})\cdot E\|x^{k+1} - x^{k}\|^2
	+ (1+\beta)(1-\frac 1 {J})S^k
	\end{align}
where the inequality comes from $\|a+b\|^2\le (1+\beta^{-1})\|a\|^2+(1+\beta)\|b\|^2$ for any $\beta>0$, and the third equality holds because at slot $k$, honest worker $w$ uniformly at random chooses one out of $J$ data samples. For the chosen data sample $j$, $\phi_{w,j}^{k+1} = x^{k}$; otherwise, $\phi_{w,j}^{k+1} = \phi_{w,j}^{k}$.
	
	Using the fact that $f'(x^*)=0$, the first term in the right-hand side of \eqref{inequality:recSk1} can be bounded as
	\begin{align*}
    \label{inequality:recSk2-1}
	&\|x^{k+1} - x^{k}\|^2 \\
	= & \|x^{k+1} - x^{k} + \gamma f'(x^k) - \gamma f'(x^k) + \gamma f'(x^*)\|^2\\
	\le & 2\|x^{k+1} - x^{k} + \gamma f'(x^k)\|^2 + 2\gamma^2 \|f'(x^k) - f'(x^*)\|^2\\
	\le & 2\|x^{k+1} - x^{k} + \gamma f'(x^k)\|^2 + 2\gamma^2L^2 \|x^k-x^*\|^2
	\end{align*}
	where the first inequality comes from $\|a+b\|^2\le2\|a\|^2+2\|b\|^2$, and the last inequality comes from Assumption \ref{assumption:muL}.
	
	Substituting \eqref{inequality:recSk2-1} into \eqref{inequality:recSk1}, and choosing $\beta= 1/J$, we have
	\begin{align}
	ES^{k+1} \le & (1+J)\cdot E\|x^{k+1} - x^{k}\|^2
	+ (1-\frac 1 {J^2})  S^k  \\
	\le & 2J\cdot E\|x^{k+1} - x^{k}\|^2
	+ (1-\frac 1 {J^2})  S^k \nonumber\\
	\le & 4J\cdot E\|x^{k+1} - x^{k} + \gamma f'(x^k)\|^2
	+ 4J\gamma^2L^2 \|x^k-x^*\|^2
	+ (1-\frac 1 {J^2})S^k \nonumber
	\end{align}
	which completes the proof.
\end{proof}

\section{Proof of Theorem \ref{theorem:convergence}}

\begin{proof} Let $z^*_\epsilon$ be the $\epsilon$-approximate geometric median of $\{m_w^k\}$. We begin by manipulating $E\|x^{k+1}-x^{*}\|^2$ as
	\begin{align}
	\label{equation:theorem-converge-1}
	& E\|x^{k+1}-x^{*}\|^2  \\
	=& E\|x^{k} -\gamma f'(x^k) -x^{*} + x^{k+1}-x^{k}+\gamma f'(x^k)\|^2 \nonumber \\
	\le & \frac{1}{1-\eta} \|x^{k} -\gamma f'(x^k) -x^{*}\|^2
	+ \frac{1}{\eta} E \|x^{k+1}-x^{k}+\gamma f'(x^k)\|^2, \nonumber
	\end{align}
	where $0<\eta<1$, and the inequality comes from $\|a+b\|^2\le \frac{1}{\eta}\|a\|^2+\frac{1}{1-\eta}\|b\|^2$.
	
	To bound the first term in the right-hand side of \eqref{equation:theorem-converge-1}, we use that $f_{w,i_w^k}$ is $\mu$-strongly convex and has $L$-Lipschitz continuous gradients. Using also the fact that $f'(x^*)=0$, we obtain
	\begin{align}
	\label{equation:theorem-converge-2}
	& \|x^{k} -\gamma f'(x^k) -x^{*}\|^2 \\
	= & \|x^{k} -\gamma (f'(x^k)-f'(x^*)) -x^{*}\|^2 \nonumber \\
	= & \|x^{k}-x^{*}\|^2 - 2\gamma\langle f'(x^k)-f'(x^*), x^{k}-x^{*}\rangle + \gamma^2 \|f'(x^k)-f'(x^*)\|^2 \nonumber \\
	\le & \|x^{k}-x^{*}\|^2 - 2\gamma\mu\|x^{k}-x^{*}\|^2 + \gamma^2L^2\|x^{k}-x^{*}\|^2 \nonumber \\
	= & (1-2\gamma\mu+\gamma^2L^2) \|x^{k}-x^{*}\|^2. \nonumber
	\end{align}
    Here $\langle f'(x^k)-f'(x^*), x^{k}-x^{*}\rangle \geq \mu\|x^{k}-x^{*}\|^2$ because $f$ is $\mu$-strongly convex; see Theorem 2.1.9 in \cite{Nesterov2003IntroductoryLO}. Further, because $f$ has $L$-Lipschitz continuous gradients, it holds that $\|f'(x^k)-f'(x^*)\|^2 \leq L^2\|x^{k}-x^{*}\|^2$.
	
	Substituting \eqref{equation:theorem-converge-2} into \eqref{equation:theorem-converge-1} yields
	\begin{align}
	\label{equation:theorem-converge-xxxx}
	E\|x^{k+1}-x^{*}\|^2
	\le & \frac{1-2\gamma\mu+\gamma^2L^2}{1-\eta} \|x^{k} -x^{*}\|^2
	+ \frac{1}{\eta}E\|x^{k+1}-x^{k}+\gamma f'(x^k)\|^2.
	\end{align}
	With $\eta=\gamma\mu/2$, as long as
    \begin{align}
		\label{condition:gamma-1}
		\gamma^2L^2 \le \frac{\gamma\mu} {2}
	\end{align}
    it follows that
    \begin{align}
		\frac{1-2\gamma\mu+\gamma^2L^2}{1-\eta}\le 1-\gamma\mu. \nonumber
	\end{align}
	Therefore, \eqref{equation:theorem-converge-xxxx} can be rewritten as
	\begin{align}
	\label{equation:theorem-converge-xxxx2}
	E\|x^{k+1}-x^{*}\|^2
	\le & (1-\gamma\mu) \|x^{k} -x^{*}\|^2
	+ \frac{2}{\gamma\mu} E \|x^{k+1}-x^{k}+\gamma f'(x^k)\|^2.
	\end{align}
	
	Then, we construct a \emph{Lyapunov function} $T^k$ as
	\begin{align} \label{definition:Tk}
	T^k:=\|x^k-x^*\|^2+c  S^k
	\end{align}
	where $c$ is any positive constant. According to the definition in \eqref{definition:Sk}, we know $S^k$ is non-negative. Therefore, $T^k$ is also non-negative.
	
	Substituting \eqref{equation:recSk} and \eqref{equation:theorem-converge-xxxx2} into \eqref{definition:Tk}, it follows that
	\begin{align} \label{inequality:ETk+1:1}
	E T^{k+1} \le & (1-\gamma\mu + 4cJ\gamma^2L^2) \|x^{k} -x^{*}\|^2
	+ \left(\frac{2}{\gamma\mu} + 4cJ\right) E\|x^{k+1}-x^{k}+\gamma f'(x^k)\|^2 + (1-\frac 1 {J^2})cS^k.
	\end{align}
	According to Lemma \ref{lemma:medfxexperror}, the second term on the right-hand side \eqref{inequality:ETk+1:1} can be bounded as
	\begin{align}
	E \|x^{k+1}-x^{k}+\gamma f'(x^k)\|^2 = \gamma^2 E \|z_\epsilon^*-f'(x^k)\|^2
	\le \gamma^2
	\left(
		2C_\alpha^2L^2S^k + 2C_\alpha^2\delta^2
		+\frac{2\epsilon^2}{(W-2B)^2}
	\right).
	\end{align}
	Hence, we have
	\begin{align} \label{inequality:ETk+1:2}
	E T^{k+1}
	\le & (1-\gamma\mu + 4cJ\gamma^2L^2) \|x^{k} -x^{*}\|^2
	+ \left(
		\left(1-\frac 1 {J^2}\right)c
		+\left(\frac{2}{\gamma\mu} + 4cJ\right)2C_\alpha^2\gamma^2L^2
	\right)S^k \\
	&+ \gamma^2\left(\frac{2}{\gamma\mu} + 4cJ\right)
	\left(
	2C_\alpha^2\delta^2
	+\frac{2\epsilon^2}{(W-2B)^2}
	\right). \nonumber
	\end{align}
	
	If we constrain the step size $\gamma$ as
	\begin{align}
	\label{condition:gamma-2}
	4cJ\gamma^2L^2 \le \frac{\gamma\mu}{2}
	\end{align}
the coefficient in front of $\|x^{k} -x^{*}\|^2$ satisfies
    \begin{align*}
    1-\gamma\mu + 4cJ\gamma^2L^2 \le 1-\frac{\gamma\mu}{2},
    \end{align*}
	and the factor $\frac{2}{\gamma\mu} + 4cJ$ satisfies
	\begin{align*}
		\frac{2}{\gamma\mu} + 4cJ
		\le \frac{2}{\gamma\mu} + \frac{\mu}{2\gamma L^2}
		\le \frac{5}{2\gamma\mu}.
	\end{align*}
    Similarly, if $\gamma$ and $c$ are chosen such that
	\begin{align}
	\label{condition:gamma-3}
	\frac{\gamma\mu}{2} < \frac{1}{2J^2}
	\end{align}
    and
	\begin{align*}
	c = \frac{10J^2C_\alpha^2\gamma L^2}{\mu}
	\ge \frac{5C_\alpha^2\gamma L^2}{\mu(1/ {J^2}-\gamma\mu/2)}
	\end{align*}
	the coefficient in front of $S^k$ satisfies
	\begin{align*}
	\left(1-\frac 1 {J^2}\right)c
	+\left(\frac{2}{\gamma\mu} + 4cJ\right)2C_\alpha^2\gamma^2L^2
	% -----------------
	\le\left(1-\frac 1 {J^2}\right)c
	+\frac{5}{\mu} C_\alpha^2\gamma L^2
	% -----------------
	\le (1-\frac{\gamma\mu}{2})c.
	\end{align*}
	Therefore, \eqref{inequality:ETk+1:2} becomes
	\begin{align} \label{inequality:ETk+1}
	E T^{k+1}
	\le & (1-\frac{\gamma\mu}{2}) \|x^{k} -x^{*}\|^2
	+ (1-\frac{\gamma\mu}{2})c S^k
	+ \frac{5\gamma}{2\mu}
	\left(
		2C_\alpha^2\delta^2
		+\frac{2\epsilon^2}{(W-2B)^2}
	\right) \\
	= & (1-\frac{\gamma\mu}{2})T^k
	+ \frac{5\gamma}{\mu}
	\left(
		C_\alpha^2\delta^2
		+\frac{\epsilon^2}{(W-2B)^2}
	\right). \nonumber
	\end{align}
	
	For simplicity, let also
	\begin{align}
	\tilde{\Delta}_2 := \frac{5\gamma}{\mu}
	\left(
	C_\alpha^2\delta^2
	+\frac{\epsilon^2}{(W-2B)^2}
	\right).
	\end{align}
	Using telescopic cancellation on \eqref{inequality:ETk+1} from slot $1$ to slot $k$ , we arrive at
	\begin{align}
	E T^{k}  \leq (1- \frac{\gamma\mu}{2})^{k} \left[T^0-\frac{2}{\gamma\mu} \tilde{\Delta}_2\right]
	+ \frac{2}{\gamma\mu}\tilde{\Delta}_2.
	\end{align}
	Here and thereafter, the expectation is taken over $i_w^t$ for all workers $w \notin \mathcal{B}$ and slots $t \leq k-1$.
	
The definition of the \textit{Lyapunov function} in \eqref{definition:Tk}, implies that
	\begin{align}
	E\|x^k-x^*\|^2
	&\le E T^k
	\le (1- \frac{\gamma\mu}{2})^{k} \Delta_1
	+\Delta_2
	\label{inequality:temp}
	\end{align}
	where the constants $\Delta_1$ and $\Delta_2$ are defined as
	\begin{align}
	\Delta_1 := \|x^0-x^*\|^2 - \Delta_2
	\end{align}
	\begin{align}
	\Delta_2 := \frac{2}{\gamma \mu} \tilde{\Delta}_2 =
	\frac{10}{\mu^2}
	\left(
		C_\alpha^2\delta^2
		+\frac{\epsilon^2}{(W-2B)^2}
	\right).
	\end{align}
	
	In our derivation so far, the step size $\gamma$ must satisfy \eqref{condition:gamma-1}, \eqref{condition:gamma-2} and \eqref{condition:gamma-3}, meaning that
	\begin{align*}
		\gamma \le \min\left\{
			\frac{\mu}{2L^2},
			\frac{\mu}{4\sqrt{5}J^{3/2}C_\alpha L ^2},
			\frac{1}{J^2\mu}
		\right\}.
	\end{align*}
	Therefore, we simply choose
	\begin{align*}
	\gamma \le \frac{\mu}{4\sqrt{5}J^2C_\alpha L^2}
	\end{align*}
and the proof is complete.
\end{proof}

\section{Proof of Theorem \ref{theorem:convergenceSGD}}

Let $z^*_\epsilon$ denote the $\epsilon$-approximate geometric median of $\{m_w^k\}$, where $m^k = f_{w, i_w^k}'(x^k)$ for $w \notin\mathcal{B}$, and arbitrary otherwise. Similar to the proof of Theorem \ref{theorem:convergence}, we first derive an upper bound on $E\|x^{k+1}-x^*\|$. Inequality \eqref{equation:theorem-converge-xxxx2} is still true for Byzantine attack resilient SGD with $\gamma< \mu/(2L^2)$, and the only difference is that $E\|x^{k+1}-x^{k}+\gamma f'(x^k)\|^2$ becomes
\begin{align*}
&E\|x^{k+1}-x^{k}+\gamma f'(x^k)\|^2 \\
= & \gamma^2 E\|z^*_\epsilon-f'(x^k)\|^2 \\
\leq & \gamma^2\left(
	2 C_\alpha^2\frac {\sum_{w\notin \mathcal{B}}E\|f_{w,i_w^k}'(x^k)-f'(x^k)\|^2} {W-B}+\frac{2\epsilon^2}{(W-2B)^2}
\right)
\nonumber \\
% ------------------------
= &
\gamma^2\left(
	2C_\alpha^2 \avgHonest
	\left(
		E\|f_{w,i_w^k}'(x^k) - f_{w}'(x^k)\|^2 + \|f_w'(x^k) - f'(x^k)\|^2
	\right)+\frac{2\epsilon^2}{(W-2B)^2}
\right)
\nonumber\\
% ------------------------
\le&
\gamma^2\left(
	2C_\alpha^2\sigma^2
	+2C_\alpha^2\delta^2
	+\frac{2\epsilon^2}{(W-2B)^2}
\right)
\end{align*}
where the first inequality and the second equality are analogous to those in \eqref{equation:lemma2-4}, while the last inequality comes from Assumptions \ref{assumption:outterVariance} and \ref{assumption:innerVariance}. Therefore, for Byzantine attack resilient SGD, we have
\begin{align}
\label{inequality:ETk+1-SGD}
E\|x^{k+1}-x^{*}\|^2
\le & (1-\gamma\mu) \|x^{k} -x^{*}\|^2
+ \frac{2}{\gamma\mu}E\|x^{k+1}-x^{k}+\gamma f'(x^k)\|^2 \\
\le & (1-\gamma\mu) \|x^{k} -x^{*}\|^2 + \frac{4\gamma}{\mu}\left(
	C_\alpha^2\sigma^2
	+C_\alpha^2\delta^2
	+\frac{\epsilon^2}{(W-2B)^2}
\right). \nonumber
\end{align}
Here and thereafter, the expectation is taken over $i_w^t$ for all workers $w \notin \mathcal{B}$, and slots $t \leq k-1$.

Using telescopic cancellation on \eqref{inequality:ETk+1-SGD} from slot $1$ to slot $k$, we deduce that
\begin{align}
E \|x^{k+1}-x^{*}\|^2   \leq (1- \gamma\mu)^{k} \Delta_1'
+ \Delta_2'
\end{align}
where $\Delta_1'$ and $\Delta_2'$ are defined as
\begin{align}
\Delta_1' := \|x^0-x^*\|^2 - \Delta_2'
\end{align}
\begin{align}
\Delta_2' := \frac{4}{\mu^2}\left(
C_\alpha^2\sigma^2
+C_\alpha^2\delta^2
+\frac{\epsilon^2}{(W-2B)^2}
\right)
\end{align}
and the proof is complete.

\end{document}